\documentclass{article}

\usepackage[T1]{fontenc}

\usepackage{microtype}
\usepackage{graphicx}
\usepackage{subcaption}
\usepackage{booktabs} % for professional tables

\usepackage{hyperref}

\usepackage[preprint]{arxiv2026}

\usepackage{amsmath}
\usepackage{amssymb}
\usepackage{mathtools}
\usepackage{amsthm}

\usepackage[capitalize,noabbrev]{cleveref}

\usepackage{fancyvrb}
\usepackage{fvextra}  % extends fancyvrb with breaklines support
\usepackage{quoting}  % for breakable quote environment
\fvset{commandchars=\\\{\},samepage=false}

\usepackage{tcolorbox}
\tcbuselibrary{breakable}

\usepackage{placeins}
\usepackage{float}
\usepackage{multirow}
\usepackage{afterpage}

\usepackage{listings}
\usepackage{xcolor}
\theoremstyle{plain}
\newtheorem{theorem}{Theorem}[section]
\newtheorem{proposition}[theorem]{Proposition}

\theoremstyle{definition}

\theoremstyle{remark}

\usepackage[textsize=tiny]{todonotes}

\icmltitlerunning{Canonical Intermediate Representation for LLM-based optimization}

\begin{document}

\raggedbottom

\twocolumn[
  \icmltitle{Canonical Intermediate Representation for \\
  LLM-based optimization problem formulation and code generation}

  % It is OKAY to include author information, even for blind submissions: the
  % style file will automatically remove it for you unless you've provided
  % the [accepted] option to the arxiv2026 package.

  % List of affiliations: The first argument should be a (short) identifier you
  % will use later to specify author affiliations Academic affiliations
  % should list Department, University, City, Region, Country Industry
  % affiliations should list Company, City, Region, Country

  % You can specify symbols, otherwise they are numbered in order. Ideally, you
  % should not use this facility. Affiliations will be numbered in order of
  % appearance and this is the preferred way.

  \begin{icmlauthorlist}
    \icmlauthor{Zhongyuan Lyu}{polyu}
    \icmlauthor{Shuoyu Hu}{polyu}
    \icmlauthor{Lujie Liu}{polyu}
    \icmlauthor{Hongxia Yang}{polyu,infix}
    \icmlauthor{Ming LI}{polyu}
  \end{icmlauthorlist}

  \icmlaffiliation{polyu}{The Hong Kong Polytechnic University, Hong Kong, China}
  \icmlaffiliation{infix}{InfiX.ai, Hong Kong, China}

  \icmlcorrespondingauthor{Ming LI}{ming.li@polyu.edu.hk}

  % You may provide any keywords that you find helpful for describing your
  % paper; these are used to populate the "keywords" metadata in the PDF but
  % will not be shown in the document
  \icmlkeywords{Machine Learning, Optimization, Code Generation}

  \vskip 0.3in
]

% this must go after the closing bracket ] following \twocolumn[ ...

% This command actually creates the footnote in the first column listing the
% affiliations and the copyright notice. The command takes one argument, which
% is text to display at the start of the footnote. The \icmlEqualContribution
% command is standard text for equal contribution. Remove it (just {}) if you
% do not need this facility.

% Use ONE of the following lines. DO NOT remove the command.
% If you have no special notice, KEEP empty braces:
\printAffiliationsAndNotice{}  % no special notice (required even if empty)
% Or, if applicable, use the standard equal contribution text:
% \printAffiliationsAndNotice{\icmlEqualContribution}

\begin{abstract}
  Automatically formulating optimization models from natural language descriptions 
  is a growing focus in operations research, yet current LLM-based approaches 
  struggle with the composite constraints and appropriate modeling paradigms 
  required by complex operational rules. To address this, we introduce the 
  Canonical Intermediate Representation (CIR): a schema 
  that LLMs explicitly generate between problem descriptions and 
  optimization models. 
  CIR encodes the semantics of operational rules through constraint archetypes 
  and candidate modeling paradigms, thereby decoupling rule logic 
  from its mathematical instantiation. 
  Upon a newly generated CIR knowledge base, 
  we develop the rule-to-constraint (R2C) framework, a multi-agent pipeline that parses 
  problem texts, synthesizes CIR implementations by retrieving domain knowledge, and instantiates optimization models. 
  To systematically evaluate rule-to-constraint 
  reasoning, we test R2C on our newly constructed benchmark featuring rich 
  operational rules, and benchmarks from prior work. 
  Extensive experiments show that R2C achieves state-of-the-art accuracy 
  on the proposed benchmark (47.2\% Accuracy Rate). 
  On established benchmarks from the literature, R2C delivers highly competitive 
  results, approaching the performance of proprietary models (e.g., GPT-5). 
  Moreover, with a reflection mechanism, R2C achieves further gains and sets 
  new best-reported results on some benchmarks.

\end{abstract}

\section{Introduction}

\begin{figure*}[t]
  \centering
  \includegraphics[width=0.85\textwidth]{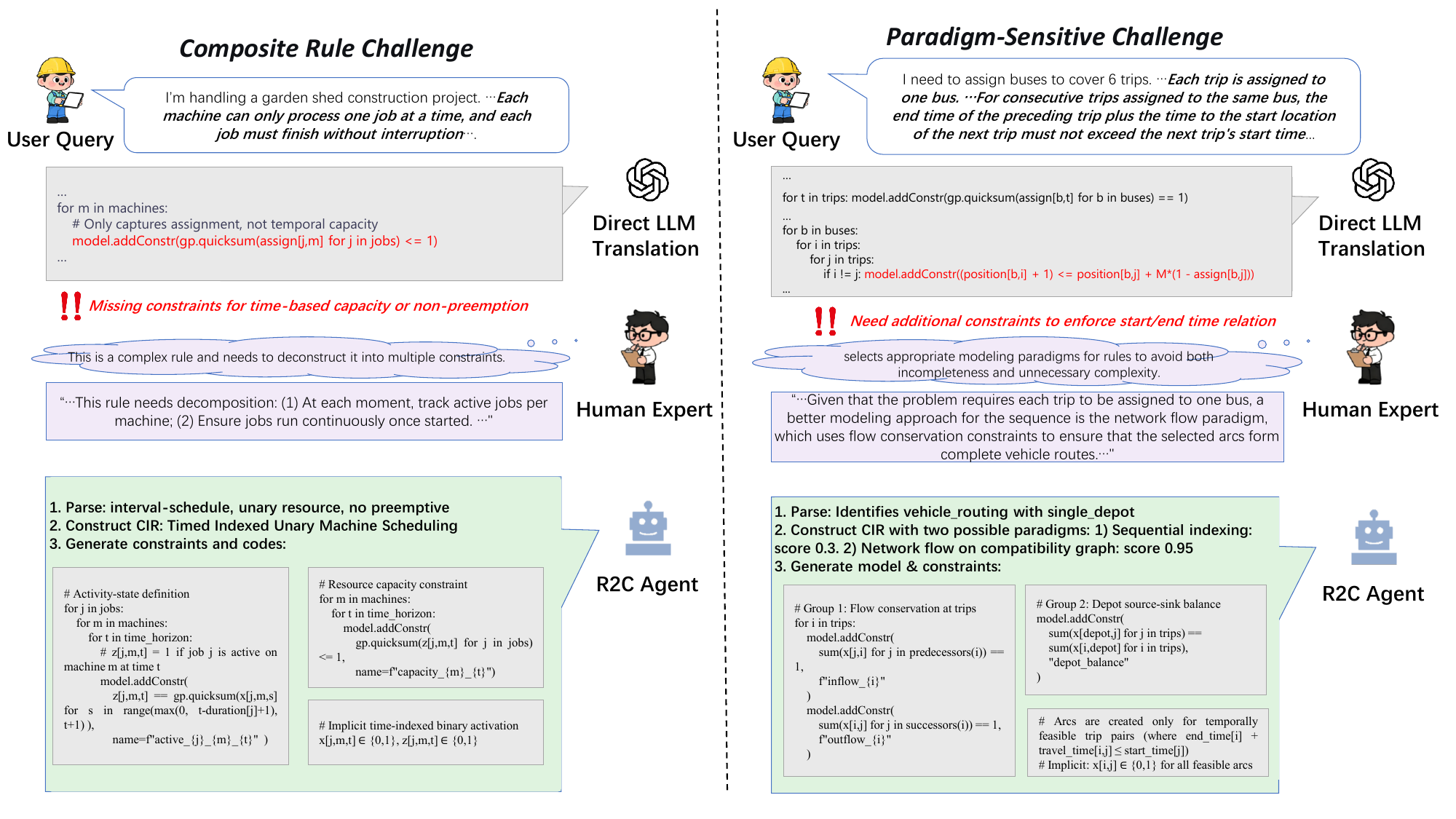}
  \caption{
    Examples of LLM-based Automated Optimization Formulation with 
    Complex Operational Rules. The figure presents a comparative analysis 
    of Direct LLM Translation, Human Expert Reasoning, and our R2C 
    Framework in addressing two core challenges: the Composite Rule 
    challenge (left), where a single rule decomposes into multiple 
    constraints; and the Paradigm-Sensitive challenge (right), 
    where the mathematical formulation is governed by the chosen 
    modeling paradigm.
    }
  \label{fig:comparison}
\end{figure*}
Operations research (OR) aims to 
improve the performance and efficiency of complex systems across a 
wide range of industrial and scientific contexts.
However, translating real-world optimization 
problems into precise mathematical models remains a primary challenge, requiring both 
expertise in optimization theory and domain-specific knowledge \cite{ramamonjison2022augmenting}. The emergence of large language models (LLMs) 
enables the automation of this process, making LLM-driven optimization and code 
generation an important research direction with profound implications for both 
academia and industry. 

Recent advances have demonstrated the capability of LLMs to generate optimization 
models and code from natural language descriptions through comprehensive approaches, 
including direct prompting \cite{bertsimas2024robust,li2023large}, multi-agent 
systems \cite{ahmaditeshnizi2024optimus,mostajabdaveh2024optimization}, and post-training on 
domain-specific data \cite{huang2025orlm,jiang2024llmopt}. Despite incorporating reflection or feedback mechanisms, existing approaches 
still rely on direct text-to-constraint conversion, 
facing critical challenges in complex operational rules.

In both industrial and academic settings, 
optimization problems are typically described in terms of operational 
rules (e.g., "Each aircraft must undergo maintenance every 100 flight hours."). 
Critically, the translation of these rules into mathematical constraints 
presents two fundamental challenges, as illustrated in \cref{fig:comparison}: 
(1) compositeness—a single rule often requires decomposition into 
multiple constraints, and (2) paradigm-sensitivity—the 
same rule may demand different formulations depending on 
the chosen modeling paradigm (e.g., discrete-time capacity constraints 
vs. continuous-time disjunctive constraints).

While LLMs can generate most correct constraints for a given 
problem, they often fail to model complex operational rules that 
require decomposition and may produce paradigm-incompatible 
formulations without specialized tuning. 
Such localized errors can invalidate the overall model
In contrast, human experts explicitly decompose rules and select modeling paradigms 
by reasoning over rule interactions and problem structure.
A further limitation is that existing benchmarks for LLM-based optimization modeling 
focus on broad coverage across problem categories 
\cite{huang2025orlm,lu2025optmath} or 
specific problem classes (e.g., dynamic programming 
\cite{zhou2025auto}), but lack instances with complex 
operational rules, making it 
hard to evaluate LLMs' capabilities in reasoning about complex 
operational rule semantics and modeling paradigms.

To address these challenges, we introduce the Canonical Intermediate 
Representation (CIR): a machine-readable schema that LLMs explicitly 
generate between natural-language problem descriptions and optimization 
models. 
CIR encodes the semantics of operational rules through constraint archetypes and candidate modeling paradigms
Building on CIR, we develop Rule-to-Constraint (R2C), a training-free, 
multi-agent framework that treats automatic formulation as a structured 
pipeline. R2C first parses problem text 
into structured rule representations, then uses retrieval-augmented 
generation (RAG) over a constructed CIR-based knowledge base to synthesize 
instance-specific CIR objects and perform paradigm-oriented clustering, and 
finally instantiates a complete mathematical model and executable solver 
code. 

We evaluate R2C on our newly constructed Operational Rule to Constraint Optimization
Benchmark (ORCOpt-Bench), specifically designed to test complex rule-to-constraint reasoning, 
and on benchmarks from prior work. 
Extensive experiments demonstrate that R2C achieves 
state-of-the-art performance on ORCOpt-Bench, outperforming strong baselines including 
the proprietary models GPT-5 \cite{openai2025gpt5} and Grok-4 \cite{xai2025grok4}. 
The framework also achieves competitive results on established benchmarks, 
matching the performance of leading closed-source models 
while demonstrating generalization without task-specific training.
Furthermore, when extended with a reflection mechanism, R2C attains additional 
performance gains, and achieves state-of-the-art performance on some benchmarks.

Our main contributions are:
\begin{enumerate}
\item We reframe LLM-based optimization modeling from direct constraint 
translation to the structured representation of rule semantics and paradigm 
selection, targeting the challenge of complex rule-to-constraint mapping.

\item We propose CIR, a structured intermediate representation that 
decouples operational logic from mathematical instantiation and 
is explicitly generated by LLMs.

\item We develop a robust, multi-agent pipeline that operationalizes 
CIR via RAG, offering a training-free solution that integrates seamlessly with off-the-shelf LLMs.

\item We construct ORCOpt-Bench to evaluate complex rule-to-constraint reasoning, 
and show that R2C achieves state-of-the-art accuracy on it 
and delivers highly competitive results on established benchmarks.

\end{enumerate}

\section{Related work}

\subsection{Application of LLMs in Optimization Problem}

Applications of LLMs to optimization problems roughly follow two paths. 
The first uses LLMs to assist the design and refinement of optimization algorithms, 
for example by evolving heuristics or tuning solver 
components \cite{ye2024reevo,liu2024evolution,sun2024autosat}. 
The second, which is the focus of this work, employs LLMs to translate natural-language 
descriptions of decision problems into formal models and executable code.

Early explorations in the latter direction directly exploited the pre-trained 
knowledge of general-purpose LLMs via prompting. \citet{bertsimas2024robust} 
showed that ChatGPT can formulate robust and adaptive counterparts from classical 
optimization problem statements, while \citet{li2023large,chen2025optichat} 
developed dialog systems that allow users to query and interpret optimization 
models and solutions in natural language.

Recognizing the difficulty of end-to-end formulation, subsequent work decomposes 
the process into specialized subtasks handled by multi-agent systems. 
OptiMUS \cite{ahmaditeshnizi2024optimus}, for example, automates mixed-integer 
linear programming (MILP) formulation, and OR-LLM-Agent \cite{zhang2025or} allocates 
modeling, coding, and debugging to different LLM agents. In parallel, data-centric 
approaches adopt supervised fine-tuning to improve domain fidelity. ORLM \cite{huang2025orlm} 
and LLMOpt \cite{jiang2024llmopt} construct large corpora of optimization problems and 
formulations to fine-tune smaller LLMs (7B to 32B). 
These SFT-based methods can achieve high accuracy but generally require 
substantial labeled data and computational resources.

Despite this progress, existing approaches rely on direct rule-to-constraint 
translation, struggling with complex rule interactions. Instead of solely improving 
single-step generation, we address this fundamental structural gap through an 
explicit semantic abstraction step.

\subsection{RAG applications in LLMs}

RAG augments LLMs with dynamically retrieved external knowledge to mitigate 
issues such as hallucinations and missing domain information. For a 
comprehensive review of this field, readers are referred to recent surveys
\cite{fan2024survey,zhao2024retrieval,gao2023retrieval}. It has been widely used in 
code generation and mathematical reasoning. For code, retrieval-augmented generation 
improves tasks such as code synthesis, completion, 
and repair \cite{parvez2021retrieval,ahmad2021unified,lu2022reacc,joshi2023repair}. 
For mathematical reasoning, RAG helps theorem provers and QA systems by retrieving 
relevant premises and textbook material \cite{yang2023leandojo,levonian2023retrieval}.

Recently, RAG has been adapted to optimization problem solving. 
DRoC \cite{jiang2025droc} tackles vehicle routing problems by decomposing 
pre-identified constraints and retrieving solver documentation and code for 
each constraint, dynamically balancing retrieval and self-debugging. 
LEAN-LLM-OPT \cite{liang2025llm} orchestrates multiple LLM agents to retrieve 
modeling templates and examples for large-scale formulations.

While RAG has advanced LLM-based optimization automation, a core 
disconnect remains between semantic rule understanding and constraint 
formulation: prior methods either bypass the translation from natural 
language (e.g., DRoC \cite{jiang2025droc}) or rely on rigid templates 
unsuitable for complex interactions (e.g., LEAN-LLM-OPT \cite{liang2025llm}). 
We address this by introducing CIR, a structured semantic 
layer that decouples operational intent from mathematical encoding, 
enabling systematic rule-to-constraint mapping.

\section{Method}
\subsection{Problem Definition}
We begin by formalizing the task of automatic optimization modeling and code generation from natural language. 
An optimization problem is specified by text $d \in \mathcal{D}$ that 
specifies: operational rules $\mathcal{R}(d) = \{r_1, \ldots, r_K\}$, 
entities, parameters, and an objective. Given the problem description $d$, 
we seek to produce: a mathematical model $M(d)$, and an executable solver code $\text{code}(d)$ that, 
when solved, yields an optimal decision respecting all operational rules.

\subsection{Canonical Intermediate Representation}
The CIR is a structured representation that decouples the semantic intent 
of operational rules from their mathematical formulation.
The CIR is built upon a predefined knowledge base of templates, 
which are then instantiated into concrete implementations during LLM-based reasoning. 
Each CIR template in the knowledge base encodes a core operational intent and 
its possible realizations under multiple modeling paradigms. Formally, a template comprises:

\begin{itemize}
  \item The semantic operational intent.
	\item The list of supported paradigms under which the semantic intent can be formulated 
  (e.g., {continuousTime, timeIndexed}).
  \item For each paradigm $p$, the template provides a mathematical expression 
  that instantiates the intent. These expressions use canonical placeholders for indices, decision variables, and parameters.
\end{itemize}

Details of the curated CIR knowledge base are provided in Appendix~\ref{app:cir}. In essence, a CIR template is not a single constraint but a bundle of semantically equivalent, paradigm-specific formulations. 
The proposed framework employs a library of predefined CIR templates, and 
each operational rule $r_k \in \mathcal{R}(d)$ is mapped 
to one or more CIR implementations. This mapping is achieved either by instantiating 
relevant CIR templates or by leveraging the LLM's internal knowledge. 
Each implementation takes the form $a_\ell = (A_\ell, k_\ell, p_\ell)$, where:
\begin{itemize}
	\item $A_\ell$ identifies the core intent (e.g., \texttt{NonOverlap}),
	\item $k_\ell$ points to the source rule $r_k$,
	\item $p_\ell$ represents the selected modeling paradigm. The implementation 
  instantiates the template by binding its placeholders 
  to the entities and parameters of problem $d$. While an implementation 
  may support multiple paradigms (\autoref{fig:r2c-framework}), the framework selects 
  a final one that is compatible across all rules.
\end{itemize}

\paragraph{From CIR to Mathematical Model.}
During LLM-based generation for a given problem $d$, all CIR implementations are aggregated into a unified 
CIR for the problem: $C(d) = (\mathcal{E}(d), \mathcal{A}(d))$, 
where $\mathcal{E}(d)$ contains all entities/parameters 
and $\mathcal{A}(d) = \{a_1, \ldots, a_L\}$ is the multiset of CIR implementations.
For each implementation $a_\ell$ with the intent $A_\ell$ and paradigm $p_\ell$, the framework generates the concrete 
mathematical constraints $\mathcal{C}_{A_\ell, p_\ell}$. Then, the complete model is the union of these constraints:
\begin{equation}
M(d) = T(C(d)) = \bigcup_{\ell=1}^{L} \mathcal{C}_{A_\ell, p_\ell}.
\label{eq:CIR-to-model}
\end{equation}

CIR is grounded in the paradigm of augmenting LLMs with structured knowledge, 
moving beyond unstructured text to abstract representations \citep{fevry2020entities, rubin2022learning}. 
By retrieving CIR templates, the framework focuses on relevant information and 
ensures semantic fidelity to the original rules, ensuring any solution satisfying 
the model respects all operational constraints. 
This property is formally captured by the following soundness guarantee:

\begin{proposition}[CIR Soundness Guarantee]
\label{prop:cir-soundness-main}
Under the design of the CIR knowledge base, every feasible solution of the translated model $M(d)$ satisfies all original operational rules in $\mathcal{R}(d)$. Formally,
\[
\mathcal{F}(M(d)) \subseteq \mathcal{F}_{\text{rules}}(d).
\]
\end{proposition}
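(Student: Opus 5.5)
The argument rests on making the phrase ``under the design of the CIR knowledge base'' precise as a set of assumptions, and then deriving the inclusion from the union structure in \eqref{eq:CIR-to-model}. We adopt three design assumptions.
\begin{itemize}
  \item \textbf{Template soundness.} For every template with intent $A$ and every supported paradigm $p$, and for every binding of its placeholders, every assignment satisfying the instantiated constraint set $\mathcal{C}_{A,p}$ also satisfies the semantic intent $A$ under that binding.
  \item \textbf{Coverage.} For every rule $r_k \in \mathcal{R}(d)$, the set of implementations $\{a_\ell : k_\ell = k\}$ is nonempty. Moreover, the conjunction of their intents $A_\ell$, read in the problem's entities $\mathcal{E}(d)$, logically entails $r_k$. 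This is the decomposition property behind ``compositeness.''
  \item \textbf{Paradigm consistency.} All selected paradigms $p_\ell$ are mutually compatible. That is, there is a common decision-variable space and a common interpretation map $\pi$ from solutions of $M(d)$ to operational decisions, so that ``$x$ satisfies $A_\ell$'' and ``$x$ satisfies $r_k$'' are evaluated on the same object $\pi(x)$.
\end{itemize}

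With these in place, the proof is a chain of set inclusions. Let $x \in \mathcal{F}(M(d))$. Since $M(d)=\bigcup_\ell \mathcal{C}_{A_\ell,p_\ell}$ is a union of constraint sets, its feasible set is the intersection $\bigcap_\ell \mathcal{F}(\mathcal{C}_{A_\ell,p_\ell})$. Hence $x$ satisfies every $\mathcal{C}_{A_\ell,p_\ell}$.

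By template soundness, $\pi(x)$ satisfies every intent $A_\ell$. Now fix a rule $r_k$. By coverage, $\pi(x)$ satisfies all $A_\ell$ with $k_\ell=k$, and their conjunction entails $r_k$, so $\pi(x)$ satisfies $r_k$. Since $k$ was arbitrary, $x\in\mathcal{F}_{\text{rules}}(d)=\bigcap_k \mathcal{F}(r_k)$.

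Two details deserve care:
\begin{itemize}
  \item Adding constraints only shrinks the feasible set, so implementations attached to other rules cannot break the argument. The monotonicity of intersection is the only structural fact used about the union.
  \item Implementations produced from the LLM's internal knowledge, rather than from templates, are covered only if the soundness assumption is extended to them as well. We will state explicitly that the guarantee is conditional on each instantiated implementation being a faithful (sound) realization.
\end{itemize}

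The main obstacle is paradigm consistency. Different paradigms, such as time-indexed and continuous-time formulations, use different variables. Soundness of each template in isolation does not automatically make the pieces speak about the same decision. We would handle this by requiring that the paradigm-oriented clustering step select a single paradigm, or a set with a fixed linking map. Soundness is then stated relative to the shared interpretation $\pi$, which reduces the claim to the per-template property. If that assumption is not granted, the proposition should be stated for the cluster-consistent case only.
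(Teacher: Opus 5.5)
Your proof is correct and follows the paper's own argument. Feasibility for $M(d)$ means every block $\mathcal{C}_{A_\ell,p_\ell}$ is satisfied, sound instantiation turns each block into its intent $\varphi_{A_\ell}$, and the semantic-design assumption turns the per-rule conjunction $\varphi^{\text{CIR}}_{r_k}$ into $x \models r_k$. Your hypotheses are slightly sharper than the paper's: you use only the entailment direction of its assumed equivalence, and your paradigm-consistency condition makes explicit the shared decision space the paper takes for granted (every $\varphi_A$ and every rule evaluated on the same $x\in\mathcal{X}$).
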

The proof, which relies on the semantic equivalence between rules and their CIR implementations 
and the soundness of the template instantiation process, 
is provided in Appendix~\ref{app:proof-soundness}.

\subsection{R2C Agent Framework}
\begin{figure*}[t]
  \centering
  \includegraphics[width=0.85\textwidth]{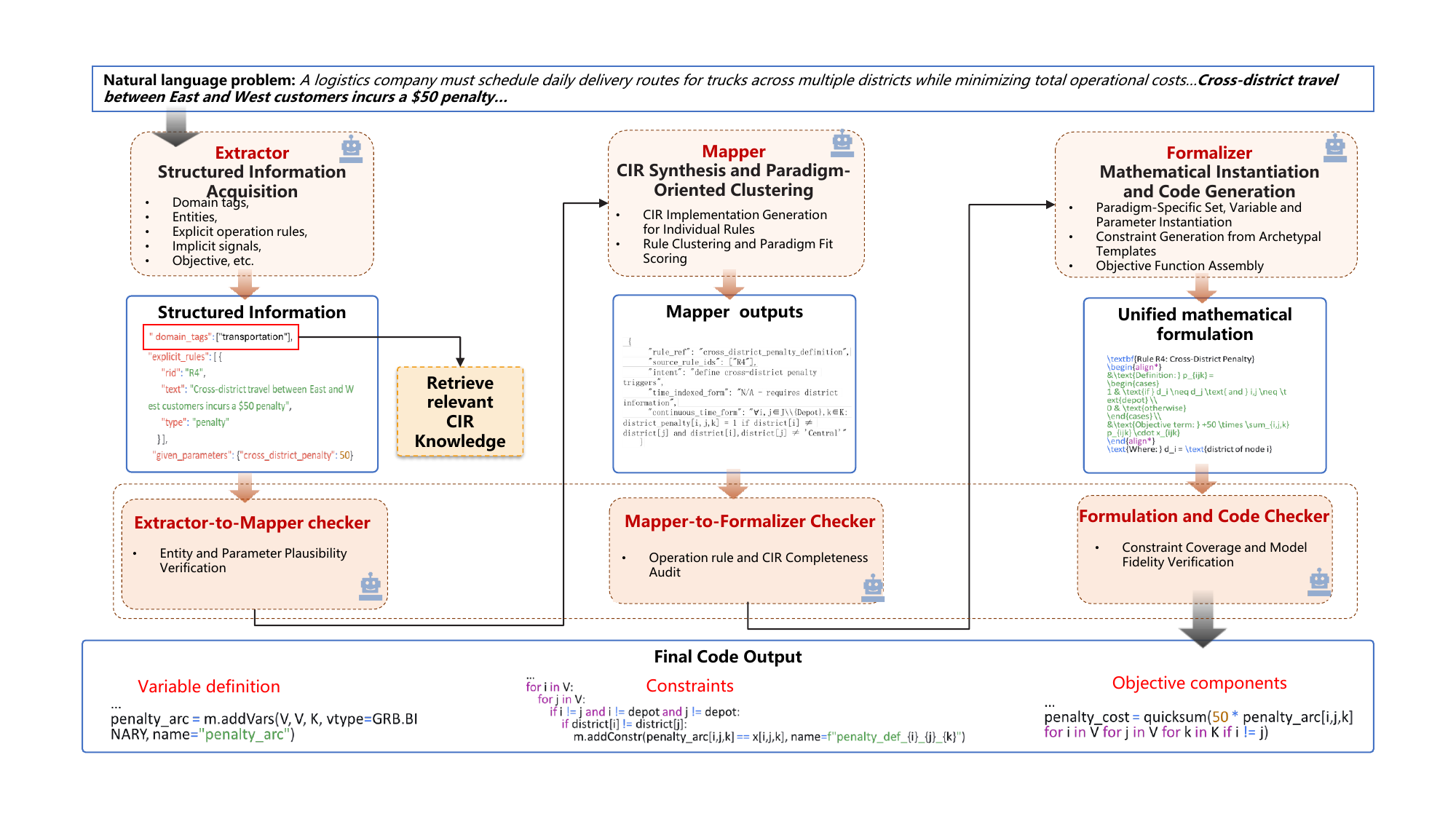}
  \caption{The R2C agent framework architecture. } 
  \label{fig:r2c-framework}
\end{figure*}

In this subsection, we formally propose the R2C agent framework, which 
is shown in \cref{fig:r2c-framework}. The R2C framework consists of four specialized agents—Extractor, 
Mapper, Formalizer, and Checker—operating across three stages: 
structured information extraction, CIR synthesis, and mathematical 
formulation with code generation. The Extractor first parses 
the problem description $d$ into structured entities, rules, and parameters. 
The Mapper then adaptively instantiates CIRs for each rule using the domain CIR library. 
The Formalizer assembles these into a complete mathematical model, defining sets, 
variables, constraints, and the objective. To prevent error propagation, 
the Checker validates each agent's output before proceeding.

\subsubsection{The Extractor Agent: Structured Information Acquisition}

The Extractor Agent parses 
the problem description $d$ and extracts structured 
information into a structured schema. 
This process reduces the semantic ambiguity inherent in handling complex operational rules, 
in contrast to direct processing of unstructured narratives
\citep{mostajabdaveh2024optimization, ahmaditeshnizi2024optimus}.
It prompts the LLM to identify and tag key elements—enabling verifiable, 
end-to-end traceability \citep{jiang2024llmopt}—and populates the following components:(1) \textit{Domain 
tags} for retrieving relevant CIR knowledge; 
(2) \textit{Entities} (e.g., tasks/jobs, resources) with their 
key attributes; (3) 
\textit{Explicit operation rules}, each tagged with a 
ID for traceability; (4) \textit{Implicit signals}, inferred to 
mitigate common modeling omissions; and (5) \textit{Objective}, 
decomposed into its constituent cost or utility components. The complete Extractor prompt is detailed in \autoref{app:extractor}. 

\subsubsection{The Mapper Agent: CIR Synthesis and Paradigm-Oriented Clustering}
The Mapper Agent is tasked with transforming the structured output 
from the Extractor into a formalization-ready schema. The Mapper has two core tasks:
1. \textbf{CIR Implementation Generation for Individual Rules:} 
To address the compositeness challenge, the Mapper maps each operational rule to one or more related CIR implementations.
Given an operation rule, the Mapper queries the domain-specific CIR library 
according to the problem domain to retrieve compatible CIR archetypes \citep{asai2024self, zhao2024retrieval}. 
It then re-parameterizes the retrieved archetypes to adapt them to the problem context through 
operations such as lifting dimensionalities, tightening coefficients via bound propagation, or introducing auxiliary variables.
2. \textbf{CIR Clustering and Paradigm Fit Scoring:} 
To directly address the paradigm-sensitivity challenge, 
the Mapper clusters CIR implementations by shared structural needs 
(e.g., network flow vs. disjunctive scheduling) and assigns a 
Paradigm Fit Score to each candidate paradigm. 
The highest-scoring paradigm is selected for consistent model instantiation.
By maintaining both 
the detailed CIR implementations and the compatible paradigm, 
the framework provides full traceability for each rule's modeling 
process \citep{gao2023retrieval}. The complete Mapper prompt is detailed in \autoref{app:mapper}.

\subsubsection{The Formalizer Agent: Mathematical Instantiation of CIRs}
The Formalizer then integrates the structured output from the Extractor, Mapper, 
and the original problem description. Leveraging the LLM's inherent capabilities, 
it synthesizes these inputs into a precise mathematical model and the corresponding 
executable, solver-specific code (e.g., Gurobi Python).
The Formalizer prompt is detailed in \autoref{app:formalizer}.

\subsubsection{The Checker Agent: Hierarchical Verification for Robust Modeling}

The Checker Agent performs staged validation to prevent error propagation caused 
by LLM randomness and performance fluctuations of LLM services. The validation 
procedure is designed around three phases:
1. Extractor-to-Mapper phase: it checks the internal consistency of extracted 
data (e.g., time windows, resource demands); 
2. Mapper-to-Formalizer phase: it audits whether the set of CIR implementations for each 
operation rule is complete; 
3. Mathematical formulation and code checking phase: it verifies that 
every explicit and validated implicit operation rules have been 
translated into corresponding mathematical constraints without any omission. 
The detailed prompts are provided in \autoref{app:checker-prompts}.

\section{Numerical experiments}

\subsection{Experimental setup}
\label{sec:experimental-setup}

In this subsection, we first introduce the benchmarks used for evaluation, 
including our newly constructed one, then detail the baseline methods and 
performance metrics. The constructed benchmark, the implementation of the R2C framework, along 
with an anonymized web-based demo, is provided in the supplementary material.

\subsubsection{Benchmarks}
To systematically evaluate the ability of LLMs to reason about mapping complex operational rules into constraints, 
we introduce the ORCOpt-Bench, specifically designed to 
test an LLM's ability to handle intricate rule interactions. 
We created 50 optimization problems from academic literature (distinct from the literature used to construct the CIR knowledge base) 
and industrial 
case studies across 10 diverse domains. The detailed illustration is given 
in Appendix~\ref{app:orcoopt}.

To comprehensively evaluate the R2C framework's performance and generalizability, 
we complement the evaluation with three additional well-established 
benchmarks which contain relatively challenging optimization instances: 1) IndustryOR \citep{huang2025orlm}; 2) BWOR \citep{zhang2025or}; 3) OptMATH \citep{lu2025optmath}.

\subsubsection{Baseline Methods}
We compare R2C against a wide range of representative baselines: 
Chain-of-Experts \citep{xiao2023chain}, Reflexion \citep{shinn2023reflexion}, Vanilla RAG, and Self-RAG \citep{asai2024self}. 
For a controlled and fair comparison, the aforementioned 
baseline methods are implemented using the same backbone LLM DeepSeek-v3.2. 
For broader context, we also report results from proprietary LLMs 
(GPT-5 \citep{openai2025gpt5}, Grok-4 \citep{xai2025grok4}) using 
standard prompting (\autoref{app:standard-prompt}). 

We also tried directly prompt the LLM to handle rule decomposition and 
paradigm compatibility. However, this ablated approach typically degrades performance
and was thus excluded it in the comparison.

The R2C framework employs DeepSeek-v3.2 and Qwen3-32B 
models to evaluate performance across different model scales.

\subsubsection{Performance Metrics}
We employ the following two widely adopted metrics to quantitatively assess the performance of all methods:

•	Accuracy Rate (AR): 
measures the proportion of generated codes that yield a correct solution. 
A solution is correct if its objective value falls within 1\% of the reference optimum, 
accounting for numerical and solver tolerances.

•	Execution Rate (ER): measures the percentage of codes that execute 
successfully without errors. 

For a fair and reproducible comparison under the pass@1 metric, all models in our direct 
evaluations were run using the LLMs' default inference parameters to ensure compatibility 
across different model families. We conducted five independent runs for each evaluation and 
report the average performance to ensure statistical stability. For results taken 
from literature (reference methods in \cref{tab:external-benchmarks}), we report them as originally published.

\begin{table}[t]
  \centering
  \caption{Performance comparison on ORCOpt-Bench. AR and ER denote Accuracy Rate and Execution Rate, respectively.}
  \label{tab:performance-comparison}
  \begin{tabular}{lcc}
    \toprule
    Method & AR (\%) & ER (\%) \\
    \midrule
    Standard prompt (Qwen3 32B) & 8 & 36.6 \\
    Standard prompt (deepseek-v3.2) & 22.4 & 74.8 \\
    Chain-of-Experts & 30.2 & 80.6 \\
    Reflexion & 37.8 & 85.2 \\
    Vanilla RAG-raw & 28 & 78.4 \\
    Self-RAG-raw & 30.4 & 78.2 \\
    GPT5 & 39.8 & 80.8 \\
    Grok4 & 45.4 & 87.4 \\
    R2C (Qwen3-32B) & 21.2 & 64 \\
    R2C (deepseek-v3.2) & 47.2 & 83.2 \\
    \bottomrule
  \end{tabular}
\end{table}

\begin{table}[t]
  \centering
  \caption{Domain-wise performance comparison on ORCOpt-Bench. Numbers indicate the average count of correctly solved problems per domain.}
  \label{tab:domain-performance}
  \begin{tabular}{lcc}
    \toprule
    Domain & R2C & Standard prompt \\
    \midrule
    Crew Assignment & 2.6 & 0.8 \\
    Education & 3.8 & 3.2 \\
    Energy System & 2.4 & 1.0 \\
    Healthcare Allocation & 1.2 & 0.0 \\
    Supply Chain \& Production & 0.0 & 0.0 \\
    Project Management & 2.6 & 0.0 \\
    Resource Allocation & 3.2 & 2.0 \\
    Sports Planning & 3.2 & 3.0 \\
    Job Shop & 1.6 & 0.0 \\
    Transportation & 3.0 & 1.2 \\
    \bottomrule
  \end{tabular}
\end{table}
\begin{table*}[h]
  \centering
  \caption{Accuracy Rate (AR) comparison on external benchmarks (\%). Results marked with -- indicate that the method was not evaluated on that benchmark.}
  \label{tab:external-benchmarks}
  \begin{tabular}{lccccc}
    \toprule
    Method & Backbone & IndustryOR & BWOR & OptMATH \\
    \midrule
    GPT-4 & -- & 28.0 & 40.2 & 16.6 \\
    ORLM \cite{huang2025orlm} & LLaMa3-8B & 38.0 & 29.3 & -- \\
    OR-LLM-Agent \cite{zhang2025or} & DeepSeek-R1 & 36.0 & 82.9 & -- \\
    OptMATH \cite{lu2025optmath} & Qwen2.5-32B & -- & -- & 34.7 \\
    Standard Prompt & deepseek-v3.2 & 53 & 63.4 & 44.0 \\
    R2C & deepseek-v3.2 & 57.0 & 75.6 & 44.6 \\
    GPT5 & -- & 59.0 & 78.0 & 45.8 \\
    Grok4 & -- & 61.0 & 80.5 & 47.0 \\
    \bottomrule
  \end{tabular}
\end{table*}

\begin{table}[h]
  \centering
  \caption{Ablation study results on ORCOpt-Bench.}
  \label{tab:ablation}
  \begin{tabular}{lcc}
    \toprule  
    Configuration & AR (\%) & ER (\%) \\
    \midrule
    Base & 22.4 & 74.8 \\
    Full R2C & 47.2 & 83.2 \\
    R2C w/o CIR & 31.6 & 77.4 \\
    Vanilla RAG-CIR & 37.9 & 81.2 \\
    Self-RAG-CIR & 38.4  & 82.0 \\
    R2C w/o Validation & 41.6 & 81.8 \\
    \bottomrule
  \end{tabular}
\end{table}

\subsection{Performance Comparison on ORCOpt-Bench}
\label{sec:performance-comparison}
This experiment provides a comprehensive performance comparison between 
the proposed R2C framework and various baseline methods on ORCOpt-Bench. 
The Vanilla RAG-raw and Self-RAG-raw methods retrieve information 
directly from the raw domain literature that was used to construct the 
CIR knowledge base. In the ablation analysis (\cref{sec:ablation}), 
to evaluate the added value of our structured CIR abstraction, 
we implement two additional variants: Vanilla RAG-CIR and Self-RAG-CIR, 
which retrieve structured templates directly from the CIR knowledge base.
The results are shown in \cref{tab:performance-comparison}.

The results show that the R2C framework demonstrates substantial improvements, achieving 
47.2\% AR with deepseek-v3.2 compared to standard prompting (22.4\%) 
and outperforming methods such as Chain of Experts (30.2\%) 
and Self-RAG-raw (30.4\%). It also surpasses the proprietary LLM 
Grok4 in accuracy (47.2\% vs. 45.4\% AR). Furthermore, the framework shows consistent gains 
across model scales: while standard Qwen3-32B attains only 8\% AR, 
R2C with the same backbone reaches 21.2\% AR. 
These results highlight the effectiveness of our CIR-based 
framework in enabling systematic reasoning from complex 
operational rules to mathematical formulations.

\begin{table*}[h]
  \centering
  \caption{Sensitivity analysis of generation attempts (pass@k).}
  \label{tab:sensitivity}
  \begin{tabular}{lccccc}
    \toprule
    Model & Metric & pass@1 & pass@5 & pass@10 \\
    \midrule
    \multirow{2}{*}{DeepSeek-v3.2} & AR & 48.0\% & 64.0\% & 70.0\% \\
    & ER & 88.0\% & 92.0\% & 94.0\% \\
    \midrule
    \multirow{2}{*}{Qwen3-32B} & AR & 18.0\% & 22.4\% & 25.2\% \\
    & ER & 58.0\% & 76.0\% & 82.0\% \\
    \bottomrule
  \end{tabular}
\end{table*}

\subsection{Detailed Domain-wise Performance Analysis}
\label{sec:detailed-analysis}

\begin{table*}[h]
  \centering
  \caption{Accuracy Rate (AR) with reflection mechanism on multiple benchmarks (\%). Maximum 3 reflection iterations allowed.}
  \label{tab:reflection-results}
  \begin{tabular}{lcccc}
    \toprule
    \multirow{2}{*}{Method} & \multicolumn{4}{c}{Benchmark} \\
    \cmidrule{2-5}
    & IndustryOR & BWOR & OptMATH & ORCOpt-Bench \\
    \midrule
    R2C & 57.0 & 75.6 & 44.6 & 47.2 \\
    GPT-5 & 59.0 & 78.0 & 45.8 & 39.8 \\
    Grok-4 & \textbf{61.0} & \textbf{80.5} & 47.0 & 45.4 \\
    R2C with reflection & 59.0 & 79.2 & \textbf{51.8} & \textbf{54.0} \\
    \bottomrule
  \end{tabular}
\end{table*}

Building upon the performance comparison using the DeepSeek-v3.2-exp backbone, 
we analyze R2C's domain-specific capabilities. As shown in \cref{tab:domain-performance}, 
R2C matches or exceeds standard prompting performance across all domains, 
with no performance degradation. The framework substantially outperforms 
the baseline in most domains, particularly in Transportation 
(4 vs. 0 correct solutions), Resource Allocation (4 vs. 0), and Job Shop (2 vs. 0). 
These gains demonstrate R2C's effectiveness in handling complex rule interactions across diverse domains.

\textbf{Modeling multiplicity:} 
We observe that R2C framework can generate semantically equivalent but 
structurally distinct formulations compared to reference 
solutions in successful cases. In \autoref{app:instance-diversity}, 
we present several representative cases where R2C framework produced valid alternative 
modeling approaches. This modeling diversity demonstrates 
that the framework performs systematic modeling-strategy reasoning based on the knowledge of CIR templates
rather than rote memorization.

\subsection{Generalization to other benchmarks}
\label{sec:generalization}

To evaluate the generalizability of R2C framework to other
optimization problems, we test R2C on existing benchmarks: 
IndustryOR, BWOR, and OptMATH. \cref{tab:external-benchmarks} compares the AR 
performance of R2C against existing methods reported in the literature and 
our newly evaluated baselines. To ensure a fair comparison, we include 
methods only on the original benchmark instances without any modifications. 

The results show that R2C maintains strong performance across diverse benchmarks.
It achieves consistent improvement over standard prompting, 
demonstrating the transferable value of its structured, CIR-based reasoning.
For instance, using the DeepSeek-v3.2 backbone, R2C achieves 57.0\% AR on IndustryOR and 75.6\% AR on BWOR, 
closely trailing the proprietary model GPT-5 (59.0\% and 78.0\%, respectively).

We also observe limitations in the current framework. First, 
the knowledge corpus is bounded. The performance is contingent on the curated 
CIR library's domain coverage. Second, as discussed in \autoref{app:cir-formalisms}, 
while the CIR's soundness guarantee (\autoref{prop:cir-soundness-main}) ensures no rule-violating solutions, 
it currently does not achieve completeness. The CIR may instantiate overly restrictive 
constraints (e.g., applying a constraint to all indices, including irrelevant boundary cases), 
potentially excluding some valid solutions.
Future work should explore automating the construction and expansion of the CIR knowledge 
base to broaden its coverage, while also incorporating constraint refinement mechanisms into the framework.

\subsection{Ablation analysis}
\label{sec:ablation}

We conduct comprehensive ablation studies on ORCOpt-Bench to evaluate 
the contribution of each core component. We compare the full R2C framework 
against several ablated versions and a Base configuration 
(standard prompting with DeepSeek-v3.2, AR 22.4\%). 
Three ablation conditions are designed: (1) CIR Ablation (R2C w/o CIR), 
which replaces the structured CIR library with standard RAG over raw text; 
(2) Alternative Retrieval Ablations (Vanilla RAG-CIR and Self-RAG-CIR), 
which query the CIR library but use different retrieval strategies; and 
(3) Validation Ablation (R2C w/o Validation), which disables the 
Checker agent. All experiments use the DeepSeek-v3.2 backbone under consistent settings.

Results (\cref{tab:ablation}) show that each component contributes 
uniquely to the framework's performance. 
Specifically, removing the CIR causes the most substantial 
AR drop (from 47.2\% to 31.6\%), underscoring its fundamental 
role in systematic rule-to-constraint mapping. The retrieval 
variants (Vanilla/Self-RAG-CIR) perform better than the no-CIR 
baseline but worse than full R2C, indicating that the structured 
CIR retrieval is necessary.
Disabling validation leads to a moderate performance decrease 
(to 41.6\% AR), confirming its role in preventing error propagation through the modeling pipeline.

\subsection{Impact of Increased Generation Attempts}
\label{sec:impact-attempts}

We evaluate the Test-time scaling for the R2C framework by increasing 
the number of generation attempts (pass@k). \Cref{tab:sensitivity} presents the 
performance of two model backbones, DeepSeek-v3.2 and Qwen3-32B, on ORCOpt-Bench. 

The results demonstrate that both models benefit from additional attempts, 
with performance improving as k increases.
DeepSeek-v3.2 achieves 70.0\% AR at pass@10—an absolute gain of 22.0 percentage points 
from its pass@1 baseline (48.0\%)—while maintaining high execution robustness 
(ER improves from 88.0\% to 94.0\%). 
The smaller Qwen3-32B model exhibits a more modest but consistent improvement, 
with its AR increasing from 18.0\% at pass@1 to 25.2\% at pass@10.
These results demonstrate that R2C's performance can be further boosted through simple test-time repetition.

\subsection{Extension with Reflection Mechanism}
\label{sec:reflection-extension}

To demonstrate the extensibility of the proposed R2C framework, we further investigate extending our R2C framework with a reflection mechanism 
triggered by execution errors or solution infeasibility. 
We evaluate the reflection-augmented R2C framework on four benchmarks 
with a maximum of 3 reflection iterations per problem (see \autoref{app:reflection-extension} for prompt details). 
The results are shown in \cref{tab:reflection-results}.

As shown in \cref{tab:reflection-results}, incorporating reflection 
yields substantial performance improvements.
R2C attains 54.0\% AR on ORCOpt-Bench and 51.8\% on OptMATH, achieving 
state-of-the-art results.
This demonstrates that the structured, traceable CIR representation 
effectively supports iterative refinement.

\section*{Conclusion}

This work tackles the fundamental challenge of mapping complex natural-language 
operational rules to executable optimization constraints. We introduce 
the Canonical Intermediate Representation (CIR), a structured abstraction 
that decouples rule semantics from mathematical instantiation through 
constraint archetypes and paradigm candidates. Built upon CIR, the R2C framework 
implements a multi-agent pipeline that executes a structured three-stage workflow: 
it (1) parses problem descriptions into explicit operational rules; 
(2) retrieves and adapts relevant CIR templates, followed by paradigm-aware clustering; and 
(3) instantiates the mathematical model along with executable solver code.

Extensive evaluation demonstrates that R2C achieves state-of-the-art performance on ORCOpt-Bench, a benchmark we construct to 
evaluate complex operational rules and their interactions. On other optimization modeling benchmarks from prior work, 
R2C achieves new highly competitive results. 
Furthermore, when extended with a reflection mechanism, R2C sets new best-reported results on some benchmarks 
and remains highly competitive on the others, outperforming or matching strong baselines including proprietary LLMs such as GPT-5 and Grok-4.
A promising direction for future work is to automate the construction 
and expansion of the CIR knowledge base, enabling coverage of broader classes 
of optimization problems and domains.

\bibliography{main_arxiv}

\begin{thebibliography}{52}
\providecommand{\natexlab}[1]{#1}
\providecommand{\url}[1]{\texttt{#1}}
\expandafter\ifx\csname urlstyle\endcsname\relax
  \providecommand{\doi}[1]{doi: #1}\else
  \providecommand{\doi}{doi: \begingroup \urlstyle{rm}\Url}\fi

\bibitem[Ahmad et~al.(2021)Ahmad, Chakraborty, Ray, and
  Chang]{ahmad2021unified}
Ahmad, W.~U., Chakraborty, S., Ray, B., and Chang, K.-W.
\newblock Unified pre-training for program understanding and generation.
\newblock \emph{arXiv preprint arXiv:2103.06333}, 2021.

\bibitem[AhmadiTeshnizi et~al.(2024)AhmadiTeshnizi, Gao, and
  Udell]{ahmaditeshnizi2024optimus}
AhmadiTeshnizi, A., Gao, W., and Udell, M.
\newblock Optimus: Scalable optimization modeling with (mi) lp solvers and
  large language models.
\newblock \emph{arXiv preprint arXiv:2402.10172}, 2024.

\bibitem[Asai et~al.(2024)Asai, Wu, Wang, Sil, and Hajishirzi]{asai2024self}
Asai, A., Wu, Z., Wang, Y., Sil, A., and Hajishirzi, H.
\newblock Self-rag: Learning to retrieve, generate, and critique through
  self-reflection.
\newblock 2024.

\bibitem[Bazari et~al.(2023)Bazari, Pooya, Soleimani~Fard, and
  Roozkhosh]{bazari2023modeling}
Bazari, S., Pooya, A., Soleimani~Fard, O., and Roozkhosh, P.
\newblock Modeling and solving the problem of scheduling university exams in
  terms of new constraints on the conflicts of professors' exams and the
  concurrence of exams with common questions.
\newblock \emph{Opsearch}, 60\penalty0 (2):\penalty0 877--915, 2023.

\bibitem[Bertsimas \& Margaritis(2024)Bertsimas and
  Margaritis]{bertsimas2024robust}
Bertsimas, D. and Margaritis, G.
\newblock Robust and adaptive optimization under a large language model lens.
\newblock \emph{arXiv preprint arXiv:2501.00568}, 2024.

\bibitem[Cayirli \& Veral(2003)Cayirli and Veral]{cayirli2003outpatient}
Cayirli, T. and Veral, E.
\newblock Outpatient scheduling in health care: a review of literature.
\newblock \emph{Production and operations management}, 12\penalty0
  (4):\penalty0 519--549, 2003.

\bibitem[Chen et~al.(2025)Chen, Constante-Flores, Mantri, Kompalli, Ahluwalia,
  and Li]{chen2025optichat}
Chen, H., Constante-Flores, G.~E., Mantri, K. S.~I., Kompalli, S.~M.,
  Ahluwalia, A.~S., and Li, C.
\newblock Optichat: Bridging optimization models and practitioners with large
  language models.
\newblock \emph{arXiv preprint arXiv:2501.08406}, 2025.

\bibitem[Dauz{\`e}re-P{\'e}r{\`e}s et~al.(2024)Dauz{\`e}re-P{\'e}r{\`e}s, Ding,
  Shen, and Tamssaouet]{dauzere2024flexible}
Dauz{\`e}re-P{\'e}r{\`e}s, S., Ding, J., Shen, L., and Tamssaouet, K.
\newblock The flexible job shop scheduling problem: A review.
\newblock \emph{European Journal of Operational Research}, 314\penalty0
  (2):\penalty0 409--432, 2024.

\bibitem[Deblaere et~al.(2011)Deblaere, Demeulemeester, and
  Herroelen]{deblaere2011proactive}
Deblaere, F., Demeulemeester, E., and Herroelen, W.
\newblock Proactive policies for the stochastic resource-constrained project
  scheduling problem.
\newblock \emph{European Journal of Operational Research}, 214\penalty0
  (2):\penalty0 308--316, 2011.

\bibitem[Eszterg{\'a}r-Kiss \& Remeli(2021)Eszterg{\'a}r-Kiss and
  Remeli]{esztergar2021toward}
Eszterg{\'a}r-Kiss, D. and Remeli, V.
\newblock Toward practical algorithms for activity chain optimization.
\newblock \emph{Transportation Letters}, 13\penalty0 (1):\penalty0 64--76,
  2021.

\bibitem[Fan et~al.(2024)Fan, Ding, Ning, Wang, Li, Yin, Chua, and
  Li]{fan2024survey}
Fan, W., Ding, Y., Ning, L., Wang, S., Li, H., Yin, D., Chua, T.-S., and Li, Q.
\newblock A survey on rag meeting llms: Towards retrieval-augmented large
  language models.
\newblock In \emph{Proceedings of the 30th ACM SIGKDD conference on knowledge
  discovery and data mining}, pp.\  6491--6501, 2024.

\bibitem[F{\'e}vry et~al.(2020)F{\'e}vry, Soares, FitzGerald, Choi, and
  Kwiatkowski]{fevry2020entities}
F{\'e}vry, T., Soares, L.~B., FitzGerald, N., Choi, E., and Kwiatkowski, T.
\newblock Entities as experts: Sparse memory access with entity supervision.
\newblock \emph{arXiv preprint arXiv:2004.07202}, 2020.

\bibitem[Gao et~al.(2023)Gao, Xiong, Gao, Jia, Pan, Bi, Dai, Sun, Wang, and
  Wang]{gao2023retrieval}
Gao, Y., Xiong, Y., Gao, X., Jia, K., Pan, J., Bi, Y., Dai, Y., Sun, J., Wang,
  H., and Wang, H.
\newblock Retrieval-augmented generation for large language models: A survey.
\newblock \emph{arXiv preprint arXiv:2312.10997}, 2\penalty0 (1), 2023.

\bibitem[Green et~al.(2006)Green, Savin, and Wang]{green2006managing}
Green, L.~V., Savin, S., and Wang, B.
\newblock Managing patient service in a diagnostic medical facility.
\newblock \emph{Operations Research}, 54\penalty0 (1):\penalty0 11--25, 2006.

\bibitem[Heil et~al.(2020)Heil, Hoffmann, and Buscher]{heil2020railway}
Heil, J., Hoffmann, K., and Buscher, U.
\newblock Railway crew scheduling: Models, methods and applications.
\newblock \emph{European journal of operational research}, 283\penalty0
  (2):\penalty0 405--425, 2020.

\bibitem[Huang et~al.(2025)Huang, Tang, Hu, Jiang, Zheng, Ge, Wang, and
  Wang]{huang2025orlm}
Huang, C., Tang, Z., Hu, S., Jiang, R., Zheng, X., Ge, D., Wang, B., and Wang,
  Z.
\newblock Orlm: A customizable framework in training large models for automated
  optimization modeling.
\newblock \emph{Operations Research}, 2025.

\bibitem[Iwamura \& Sugimura(2010)Iwamura and Sugimura]{iwamura2010study}
Iwamura, K. and Sugimura, N.
\newblock A study on real-time scheduling for autonomous distributed
  manufacturing systems.
\newblock In \emph{2010 IEEE International Conference on Systems, Man and
  Cybernetics}, pp.\  1352--1357. IEEE, 2010.

\bibitem[Jans \& Degraeve(2008)Jans and Degraeve]{jans2008modeling}
Jans, R. and Degraeve, Z.
\newblock Modeling industrial lot sizing problems: a review.
\newblock \emph{International Journal of Production Research}, 46\penalty0
  (6):\penalty0 1619--1643, 2008.

\bibitem[Jiang et~al.(2024)Jiang, Shu, Qian, Lu, Zhou, Zhou, and
  Yu]{jiang2024llmopt}
Jiang, C., Shu, X., Qian, H., Lu, X., Zhou, J., Zhou, A., and Yu, Y.
\newblock Llmopt: Learning to define and solve general optimization problems
  from scratch.
\newblock \emph{arXiv preprint arXiv:2410.13213}, 2024.

\bibitem[Jiang et~al.(2025)Jiang, Wu, Zhang, and Zhang]{jiang2025droc}
Jiang, X., Wu, Y., Zhang, C., and Zhang, Y.
\newblock Droc: Elevating large language models for complex vehicle routing via
  decomposed retrieval of constraints.
\newblock In \emph{13th international Conference on Learning Representations,
  ICLR 2025}. OpenReview. net, 2025.

\bibitem[Joshi et~al.(2023)Joshi, Sanchez, Gulwani, Le, Verbruggen, and
  Radi{\v{c}}ek]{joshi2023repair}
Joshi, H., Sanchez, J.~C., Gulwani, S., Le, V., Verbruggen, G., and
  Radi{\v{c}}ek, I.
\newblock Repair is nearly generation: Multilingual program repair with llms.
\newblock In \emph{Proceedings of the AAAI Conference on Artificial
  Intelligence}, volume~37, pp.\  5131--5140, 2023.

\bibitem[Levonian et~al.(2023)Levonian, Li, Zhu, Gade, Henkel, Postle, and
  Xing]{levonian2023retrieval}
Levonian, Z., Li, C., Zhu, W., Gade, A., Henkel, O., Postle, M.-E., and Xing,
  W.
\newblock Retrieval-augmented generation to improve math question-answering:
  Trade-offs between groundedness and human preference.
\newblock \emph{arXiv preprint arXiv:2310.03184}, 2023.

\bibitem[Li et~al.(2023)Li, Mellou, Zhang, Pathuri, and Menache]{li2023large}
Li, B., Mellou, K., Zhang, B., Pathuri, J., and Menache, I.
\newblock Large language models for supply chain optimization.
\newblock \emph{arXiv preprint arXiv:2307.03875}, 2023.

\bibitem[Liang et~al.(2025)Liang, Lu, Mao, Sun, Zeng, Jin, Qin, Zhu, Teo,
  et~al.]{liang2025llm}
Liang, K., Lu, Y., Mao, J., Sun, S., Zeng, C., Jin, X., Qin, H., Zhu, R., Teo,
  C.-P., et~al.
\newblock Llm for large-scale optimization model auto-formulation: A
  lightweight few-shot learning approach.
\newblock \emph{Congcong and Jin, Xiao and Qin, Hanzhang and Zhu, Ruihao and
  Teo, Chung-Piaw, LLM for Large-Scale Optimization Model Auto-Formulation: A
  Lightweight Few-Shot Learning Approach (June 28, 2025)}, 2025.

\bibitem[Liu et~al.(2024)Liu, Tong, Yuan, Lin, Luo, Wang, Lu, and
  Zhang]{liu2024evolution}
Liu, F., Tong, X., Yuan, M., Lin, X., Luo, F., Wang, Z., Lu, Z., and Zhang, Q.
\newblock Evolution of heuristics: Towards efficient automatic algorithm design
  using large language model.
\newblock \emph{arXiv preprint arXiv:2401.02051}, 2024.

\bibitem[Lu et~al.(2025)Lu, Xie, Wu, Ren, Chen, and Wen]{lu2025optmath}
Lu, H., Xie, Z., Wu, Y., Ren, C., Chen, Y., and Wen, Z.
\newblock Optmath: A scalable bidirectional data synthesis framework for
  optimization modeling.
\newblock \emph{arXiv preprint arXiv:2502.11102}, 2025.

\bibitem[Lu et~al.(2022)Lu, Duan, Han, Guo, Hwang, and
  Svyatkovskiy]{lu2022reacc}
Lu, S., Duan, N., Han, H., Guo, D., Hwang, S.-w., and Svyatkovskiy, A.
\newblock Reacc: A retrieval-augmented code completion framework.
\newblock \emph{arXiv preprint arXiv:2203.07722}, 2022.

\bibitem[Maenhout \& Vanhoucke(2010)Maenhout and Vanhoucke]{maenhout2010hybrid}
Maenhout, B. and Vanhoucke, M.
\newblock A hybrid scatter search heuristic for personalized crew rostering in
  the airline industry.
\newblock \emph{European journal of operational research}, 206\penalty0
  (1):\penalty0 155--167, 2010.

\bibitem[Merkert et~al.(2015)Merkert, Harjunkoski, Isaksson, S{\"a}ynevirta,
  Saarela, and Sand]{merkert2015scheduling}
Merkert, L., Harjunkoski, I., Isaksson, A., S{\"a}ynevirta, S., Saarela, A.,
  and Sand, G.
\newblock Scheduling and energy--industrial challenges and opportunities.
\newblock \emph{Computers \& Chemical Engineering}, 72:\penalty0 183--198,
  2015.

\bibitem[Mostajabdaveh et~al.(2024)Mostajabdaveh, Yu, Ramamonjison, Carenini,
  Zhou, and Zhang]{mostajabdaveh2024optimization}
Mostajabdaveh, M., Yu, T.~T., Ramamonjison, R., Carenini, G., Zhou, Z., and
  Zhang, Y.
\newblock Optimization modeling and verification from problem specifications
  using a multi-agent multi-stage llm framework.
\newblock \emph{INFOR: Information Systems and Operational Research},
  62\penalty0 (4):\penalty0 599--617, 2024.

\bibitem[OpenAI(2025)]{openai2025gpt5}
OpenAI.
\newblock Introducing gpt-5.
\newblock \url{https://openai.com}, August 2025.

\bibitem[Parvez et~al.(2021)Parvez, Ahmad, Chakraborty, Ray, and
  Chang]{parvez2021retrieval}
Parvez, M.~R., Ahmad, W.~U., Chakraborty, S., Ray, B., and Chang, K.-W.
\newblock Retrieval augmented code generation and summarization.
\newblock \emph{arXiv preprint arXiv:2108.11601}, 2021.

\bibitem[Perumal et~al.(2022)Perumal, Lusby, and Larsen]{perumal2022electric}
Perumal, S.~S., Lusby, R.~M., and Larsen, J.
\newblock Electric bus planning \& scheduling: A review of related problems and
  methodologies.
\newblock \emph{European Journal of Operational Research}, 301\penalty0
  (2):\penalty0 395--413, 2022.

\bibitem[Pillay(2016)]{pillay2016review}
Pillay, N.
\newblock A review of hyper-heuristics for educational timetabling.
\newblock \emph{Annals of Operations Research}, 239\penalty0 (1):\penalty0
  3--38, 2016.

\bibitem[Ramamonjison et~al.(2022)Ramamonjison, Li, Yu, He, Rengan,
  Banitalebi-Dehkordi, Zhou, and Zhang]{ramamonjison2022augmenting}
Ramamonjison, R., Li, H., Yu, T., He, S., Rengan, V., Banitalebi-Dehkordi, A.,
  Zhou, Z., and Zhang, Y.
\newblock Augmenting operations research with auto-formulation of optimization
  models from problem descriptions.
\newblock In \emph{Proceedings of the 2022 Conference on Empirical Methods in
  Natural Language Processing: Industry Track}, pp.\  29--62, 2022.

\bibitem[Ribeiro(2012)]{ribeiro2012sports}
Ribeiro, C.~C.
\newblock Sports scheduling: Problems and applications.
\newblock \emph{International Transactions in Operational Research},
  19\penalty0 (1-2):\penalty0 201--226, 2012.

\bibitem[Rubin et~al.(2022)Rubin, Herzig, and Berant]{rubin2022learning}
Rubin, O., Herzig, J., and Berant, J.
\newblock Learning to retrieve prompts for in-context learning.
\newblock In \emph{Proceedings of the 2022 conference of the North American
  chapter of the association for computational linguistics: human language
  technologies}, pp.\  2655--2671, 2022.

\bibitem[Sahling et~al.(2009)Sahling, Buschk{\"u}hl, Tempelmeier, and
  Helber]{sahling2009solving}
Sahling, F., Buschk{\"u}hl, L., Tempelmeier, H., and Helber, S.
\newblock Solving a multi-level capacitated lot sizing problem with
  multi-period setup carry-over via a fix-and-optimize heuristic.
\newblock \emph{Computers \& Operations Research}, 36\penalty0 (9):\penalty0
  2546--2553, 2009.

\bibitem[S{\'a}nchez et~al.(2023)S{\'a}nchez, Lalla-Ruiz, Gil, Castro, and
  Vo{\ss}]{sanchez2023resource}
S{\'a}nchez, M.~G., Lalla-Ruiz, E., Gil, A.~F., Castro, C., and Vo{\ss}, S.
\newblock Resource-constrained multi-project scheduling problem: A survey.
\newblock \emph{European Journal of Operational Research}, 309\penalty0
  (3):\penalty0 958--976, 2023.

\bibitem[Shinn et~al.(2023)Shinn, Cassano, Gopinath, Narasimhan, and
  Yao]{shinn2023reflexion}
Shinn, N., Cassano, F., Gopinath, A., Narasimhan, K., and Yao, S.
\newblock Reflexion: Language agents with verbal reinforcement learning.
\newblock \emph{Advances in Neural Information Processing Systems},
  36:\penalty0 8634--8652, 2023.

\bibitem[Sun et~al.(2024)Sun, Ye, Zhang, Huang, Zhang, Wei, and
  Cai]{sun2024autosat}
Sun, Y., Ye, F., Zhang, X., Huang, S., Zhang, B., Wei, K., and Cai, S.
\newblock Autosat: Automatically optimize sat solvers via large language
  models.
\newblock \emph{arXiv preprint arXiv:2402.10705}, 2024.

\bibitem[T{\"u}ys{\"u}z et~al.(2024)T{\"u}ys{\"u}z, Okumu{\c{s}}, Aymaz, and
  {\c{C}}avdar]{tuysuz2024real}
T{\"u}ys{\"u}z, M., Okumu{\c{s}}, H.~I., Aymaz, {\c{S}}., and {\c{C}}avdar, B.
\newblock Real-time application of a demand-side management strategy using
  optimization algorithms.
\newblock \emph{Applied Energy}, 368:\penalty0 123373, 2024.

\bibitem[{x.ai}(2025)]{xai2025grok4}
{x.ai}.
\newblock Grok 4.
\newblock \url{https://x.ai/news/grok-4}, 2025.

\bibitem[Xiao et~al.(2023)Xiao, Zhang, Wu, Xu, Wang, Han, Fu, Zhong, Zeng,
  Song, et~al.]{xiao2023chain}
Xiao, Z., Zhang, D., Wu, Y., Xu, L., Wang, Y.~J., Han, X., Fu, X., Zhong, T.,
  Zeng, J., Song, M., et~al.
\newblock Chain-of-experts: When llms meet complex operations research
  problems.
\newblock In \emph{The twelfth international conference on learning
  representations}, 2023.

\bibitem[Yang et~al.(2023)Yang, Swope, Gu, Chalamala, Song, Yu, Godil, Prenger,
  and Anandkumar]{yang2023leandojo}
Yang, K., Swope, A., Gu, A., Chalamala, R., Song, P., Yu, S., Godil, S.,
  Prenger, R.~J., and Anandkumar, A.
\newblock Leandojo: Theorem proving with retrieval-augmented language models.
\newblock \emph{Advances in Neural Information Processing Systems},
  36:\penalty0 21573--21612, 2023.

\bibitem[Ye et~al.(2024)Ye, Wang, Cao, Berto, Hua, Kim, Park, and
  Song]{ye2024reevo}
Ye, H., Wang, J., Cao, Z., Berto, F., Hua, C., Kim, H., Park, J., and Song, G.
\newblock Reevo: Large language models as hyper-heuristics with reflective
  evolution.
\newblock \emph{Advances in neural information processing systems},
  37:\penalty0 43571--43608, 2024.

\bibitem[Zhan et~al.(2015)Zhan, Liu, Gong, Zhang, Chung, and Li]{zhan2015cloud}
Zhan, Z.-H., Liu, X.-F., Gong, Y.-J., Zhang, J., Chung, H. S.-H., and Li, Y.
\newblock Cloud computing resource scheduling and a survey of its evolutionary
  approaches.
\newblock \emph{ACM Computing Surveys (CSUR)}, 47\penalty0 (4):\penalty0 1--33,
  2015.

\bibitem[Zhang \& Luo(2025)Zhang and Luo]{zhang2025or}
Zhang, B. and Luo, P.
\newblock Or-llm-agent: Automating modeling and solving of operations research
  optimization problem with reasoning large language model.
\newblock \emph{arXiv preprint arXiv:2503.10009}, 2025.

\bibitem[Zhang et~al.(2025)Zhang, Shi, and Tang]{zhang2025optimization}
Zhang, J., Shi, X., and Tang, Y.
\newblock Optimization of sports event operations through algorithmic
  scheduling and management.
\newblock \emph{International Journal of Information System Modeling and Design
  (IJISMD)}, 16\penalty0 (1):\penalty0 1--23, 2025.

\bibitem[Zhao et~al.(2024)Zhao, Zhang, Yu, Wang, Geng, Fu, Yang, Zhang, Jiang,
  and Cui]{zhao2024retrieval}
Zhao, P., Zhang, H., Yu, Q., Wang, Z., Geng, Y., Fu, F., Yang, L., Zhang, W.,
  Jiang, J., and Cui, B.
\newblock Retrieval-augmented generation for ai-generated content: A survey.
\newblock \emph{arXiv preprint arXiv:2402.19473}, 2024.

\bibitem[Zhou et~al.(2025)Zhou, Yang, Xin, Chen, He, and Ge]{zhou2025auto}
Zhou, C., Yang, J., Xin, L., Chen, Y., He, Z., and Ge, D.
\newblock Auto-formulating dynamic programming problems with large language
  models.
\newblock \emph{arXiv preprint arXiv:2507.11737}, 2025.

\bibitem[Zhou et~al.(2023)Zhou, Fan, Zhou, Li, Lei, Xu, and
  Nallanathan]{zhou2023priority}
Zhou, W., Fan, L., Zhou, F., Li, F., Lei, X., Xu, W., and Nallanathan, A.
\newblock Priority-aware resource scheduling for uav-mounted mobile edge
  computing networks.
\newblock \emph{IEEE Transactions on Vehicular Technology}, 72\penalty0
  (7):\penalty0 9682--9687, 2023.

\end{thebibliography}
\bibliographystyle{arxiv2026}
%%%%%%%%%%%%%%%%%%%%%%%%%%%%%%%%%%%%%%%%%%%%%%%%%%%%%%%%%%%%%%%%%%%%%%%%%%%%%%%
%%%%%%%%%%%%%%%%%%%%%%%%%%%%%%%%%%%%%%%%%%%%%%%%%%%%%%%%%%%%%%%%%%%%%%%%%%%%%%%
% APPENDIX
%%%%%%%%%%%%%%%%%%%%%%%%%%%%%%%%%%%%%%%%%%%%%%%%%%%%%%%%%%%%%%%%%%%%%%%%%%%%%%%
%%%%%%%%%%%%%%%%%%%%%%%%%%%%%%%%%%%%%%%%%%%%%%%%%%%%%%%%%%%%%%%%%%%%%%%%%%%%%%%
\newpage
\appendix
\onecolumn

\section*{Appendix}
\addcontentsline{toc}{section}{Appendix}

\subsection*{Appendix Contents}
\begin{itemize}
  \item \ref{app:cir-formalisms} -- Formal Semantics and Soundness of the CIR
  \item \ref{app:prompts} -- Prompts of R2C agents
  \begin{itemize}
    \item \ref{app:extractor} -- The Extractor Agent
    \item \ref{app:mapper} -- The Mapper Agent
    \item \ref{app:formalizer} -- The Formalizer Agent
    \item \ref{app:checker-prompts} -- Checker Agent Verification Prompts
    \item \ref{app:standard-prompt} -- Standard Prompting Baseline
  \end{itemize}
  \item \ref{app:cir} -- CIR domain knowledge construction
  \begin{itemize}
    \item \ref{app:cir:examples} -- CIR Examples
  \end{itemize}
  \item \ref{app:orcoopt} -- ORCOpt-Bench: Operation Rule to Constraint Optimization Benchmark
  \item \ref{app:instance-diversity} -- Instance code diversity
  \item \ref{app:reflection-extension} -- Extension with Reflection Mechanism
\end{itemize}

\section{Formal Semantics and Soundness of the CIR}
\label{app:cir-formalisms}

This appendix provides the formal definitions and proofs underlying the Canonical Intermediate Representation (CIR) and its soundness guarantee stated in Proposition~\ref{prop:cir-soundness-main} of the main text.

\subsection{Formal Semantic Foundations}
\label{app:cir-semantics}

Let $\mathcal{X}$ be the space of possible decisions for a problem. A solution $x \in \mathcal{X}$ satisfies a rule $r_k$, denoted $x \models r_k$, if it adheres to the rule's logical intent. The set of all solutions satisfying every rule in $\mathcal{R}(d)$ is:
\[
\mathcal{F}_{\text{rules}}(d) = \{x \in \mathcal{X} \mid \forall r_k \in \mathcal{R}(d), x \models r_k\}.
\]

\paragraph{Semantic Intent of a CIR Template.}
Each CIR template corresponds to a core operational intent $A$, 
characterized by a logical formula $\varphi_A(x)$. 
This formula defines the intended behavior of any solution $x$ with respect to that intent, 
independently of any modeling paradigm. 
For instance, $\varphi_{\text{SystemPowerBalance}}(x)$ states that ``power supply matches 
demand at all times,'' abstracting away time representation (e.g., as a continuous variable or a discrete index).

\paragraph{From Rules to CIR Semantics.}
When a rule $r_k$ is mapped to CIR implementations $\{a_\ell = (A_\ell, k_\ell, p_\ell) \mid k_\ell = k\}$, the CIR-semantics of $r_k$ is defined as the logical conjunction of the semantic intents of all its constituent parts:
\begin{equation}
\varphi_{r_k}^{\text{CIR}}(x) := \bigwedge_{\ell: k_\ell = k} \varphi_{A_\ell}(x).
\label{eq:cir-rule-semantics}
\end{equation}

\paragraph{Design Guarantee (Semantic Equivalence).}
The CIR knowledge base is constructed to ensure that for any well-formed rule $r_k$, its CIR-based representation is semantically equivalent to the original rule:
\begin{equation}
x \models r_k \quad \text{if and only if} \quad \varphi_{r_k}^{\text{CIR}}(x) \text{ holds}.
\label{eq:design-equivalence}
\end{equation}

\subsection{Soundness Proposition and Proof}
\label{app:proof-soundness}
\begin{proposition}[Full CIR Soundness Guarantee]
\label{prop:cir-soundness-full}
Assume the CIR knowledge base satisfies:
\begin{enumerate}
	\item \textbf{Semantic Design:} For each rule \(r_k\), if a CIR implementation for \(r_k\) is 
  generated by instantiating a template from the knowledge base, then its 
  CIR-semantics \(\varphi_{r_k}^{\text{CIR}}(x)\) is semantically equivalent to the original rule.
	\item \textbf{Sound Instantiation:} For every intent $A$ and paradigm $p$, the process 
  of instantiating the corresponding template to produce concrete constraints 
  $\mathcal{C}_{A,p}$ is sound. Formally, any solution satisfying the concrete 
  constraints must satisfy the intent's semantic formula:
	\[
	\text{If } x \text{ satisfies } \mathcal{C}_{A,p}, \text{ then } \varphi_A(x) \text{ holds.}
	\]
\end{enumerate}
Then, for any problem description $d$ and its CIR $C(d)$, the translated model $M(d) = \bigcup_{\ell=1}^{L} \mathcal{C}_{A_\ell, p_\ell}$ is sound with respect to the original rules:
\[
\mathcal{F}(M(d)) \subseteq \mathcal{F}_{\text{rules}}(d).
\]
\textbf{Scope:} This guarantee applies to any CIR representation \(C(d)\) where the implementations in \(\mathcal{A}(d)\) are generated by instantiating templates from the CIR knowledge base.
\end{proposition}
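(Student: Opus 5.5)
The argument is a direct chase of set inclusions: fix a feasible point, push it through the model's constraint blocks to the intents, then through the intents to the rules.

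Fix a problem description $d$ with CIR $C(d)=(\mathcal{E}(d),\mathcal{A}(d))$, where $\mathcal{A}(d)=\{a_1,\dots,a_L\}$ and $a_\ell=(A_\ell,k_\ell,p_\ell)$. Let $x\in\mathcal{F}(M(d))$ be arbitrary; the goal is $x\in\mathcal{F}_{\text{rules}}(d)$.

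\emph{Step 1 (from the model to each block).} By \eqref{eq:CIR-to-model}, $M(d)$ is the union of the constraint blocks $\mathcal{C}_{A_\ell,p_\ell}$. Being feasible for a union of constraint sets means satisfying every constraint in it. Hence $x$ satisfies each block $\mathcal{C}_{A_\ell,p_\ell}$ for $\ell=1,\dots,L$. This is immediate, but I will state explicitly that feasibility is read conjunctively over constraints, so the union of constraint sets corresponds to an intersection of feasible sets.

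\emph{Step 2 (from blocks to intents).} Apply the Sound Instantiation hypothesis to each pair $(A_\ell,p_\ell)$. Since $x$ satisfies $\mathcal{C}_{A_\ell,p_\ell}$, it follows that $\varphi_{A_\ell}(x)$ holds for every $\ell$.

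\emph{Step 3 (from intents to rules).} Fix a rule $r_k\in\mathcal{R}(d)$ and group the implementations by source rule, $\{\ell : k_\ell=k\}$. By Step 2 every conjunct in \eqref{eq:cir-rule-semantics} holds, so $\varphi^{\text{CIR}}_{r_k}(x)$ holds. The Semantic Design hypothesis applies by the stated Scope, since all implementations are instantiated from the knowledge base. Its equivalence \eqref{eq:design-equivalence} gives $\varphi^{\text{CIR}}_{r_k}(x)\Rightarrow x\models r_k$; only this direction is needed. Since $k$ was arbitrary, $x\models r_k$ for all $r_k$, so $x\in\mathcal{F}_{\text{rules}}(d)$. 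As $x$ was arbitrary, $\mathcal{F}(M(d))\subseteq\mathcal{F}_{\text{rules}}(d)$.

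The main obstacle is in Step 3: making sure every rule $r_k$ is actually covered by at least one implementation. If $\{\ell:k_\ell=k\}$ were empty, the conjunction would be vacuously true and the inference would fail unless $r_k$ is trivially satisfied. I would handle this by reading the Semantic Design hypothesis as asserting equivalence for each rule of $\mathcal{R}(d)$. That reading implicitly requires every rule to be mapped, which is the completeness audit the Checker performs. I would state this coverage condition explicitly as part of the Scope.

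A secondary point is that the decision spaces must match. The variables of $M(d)$, including auxiliary ones introduced during instantiation, must be identified with, or projected onto, $\mathcal{X}$. I would interpret $\mathcal{F}(M(d))$ as the projection of the model's feasible set onto $\mathcal{X}$. Sound Instantiation is then read for that projection.
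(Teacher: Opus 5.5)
Your proof is correct and follows the paper's argument step for step: feasibility for the union in \eqref{eq:CIR-to-model} gives satisfaction of every block, Sound Instantiation gives each $\varphi_{A_\ell}(x)$, the conjunction \eqref{eq:cir-rule-semantics} gives $\varphi^{\text{CIR}}_{r_k}(x)$, and Semantic Design yields $x\models r_k$. Your two caveats make explicit what the paper leaves implicit, and the coverage one is genuinely needed: hypothesis 1 is stated conditionally (\emph{if} an implementation for $r_k$ is generated), so the paper's final step silently assumes every rule has at least one implementation, which is exactly the condition you propose adding to the Scope.
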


\begin{proof}
Let $\mathcal{X}$ be the decision space. A solution $x \in \mathcal{X}$ is rule-consistent if it satisfies every rule $r_k \in \mathcal{R}(d)$, denoted by $x \models r_k$. The set of all rule-consistent solutions is:
\[
\mathcal{F}_{\text{rules}}(d) = \{x \in \mathcal{X} : x \models r_k \text{ for all } r_k \in \mathcal{R}(d)\}.
\]

We prove that any $x \in \mathcal{F}(M(d))$ necessarily belongs to $\mathcal{F}_{\text{rules}}(d)$.
First, by definition of $M(d)$ (Equation~\eqref{eq:CIR-to-model} in the main text), $x \in \mathcal{F}(M(d))$ implies that $x$ satisfies every concrete constraint block $\mathcal{C}_{A_\ell, p_\ell}$ for all $\ell = 1, \ldots, L$.
Then, by the Sound Instantiation assumption (2), for each $\ell$, since $x$ satisfies $\mathcal{C}_{A_\ell, p_\ell}$, it must satisfy the semantic intent of the corresponding archetype: $\varphi_{A_\ell}(x)$ holds.
Next, consider any rule $r_k \in \mathcal{R}(d)$. Let $\{\ell \mid k_\ell = k\}$ be the 
indices of its associated CIR implementations. From the previous step, $x$ satisfies $\varphi_{A_\ell}(x)$ for all $\ell$ in this set. Therefore, by the definition of CIR-semantics (Equation~\eqref{eq:cir-rule-semantics}), $x$ satisfies their logical conjunction $\varphi_{r_k}^{\text{CIR}}(x)$.
Finally, by the Semantic Design guarantee (1), satisfying $\varphi_{r_k}^{\text{CIR}}(x)$ is equivalent to satisfying the original rule $r_k$. Hence, $x \models r_k$.
Since $r_k$ was arbitrary, $x$ satisfies all rules in $\mathcal{R}(d)$. Therefore, $x \in \mathcal{F}_{\text{rules}}(d)$, completing the proof.
\end{proof}
The guarantee provided by Proposition~\ref{prop:cir-soundness-full} is one of 
soundness (the model does not permit rule-violating solutions), not necessarily 
completeness (the model might exclude some rule-consistent solutions). 
Completeness depends on the specific constraints generated by the 
paradigm-specific translators. The framework's primary contribution is 
providing a structured pathway to achieve soundness, a critical foundation for reliable automated modeling.

\section{Prompts of R2C agents}
\label{app:prompts}

\subsection{The Extractor Agent}
\label{app:extractor}

The Extractor Agent parses unstructured natural language problem descriptions into structured representations. 
Below we present the complete system prompt used for the Extractor Agent:

\begin{quote}
\begin{Verbatim}[fontsize=\footnotesize,breaklines=true,breaksymbol={},samepage=false]
  You are the "Universal Operations Problem Extractor".

  Mission
  
  * Read a natural-language description of an optimization problem.
  * Understand the domain and intent.
  * Output ONE strict JSON object (no extra text, no Markdown fences) that captures: entities, measures/metrics, explicit rules, implicit signals, and the target cost/objective**.
  * The objective must exactly mirror the problem's stated ask; do not introduce surrogate goals, re-weighting, or invented penalties.
  * Define the objective so that the optimizer's final value/selection is directly the requested answer (no post-hoc conversions).
  
  Domains & Tagging
  
  * Allowed domain tags (choose one or more):    
  ["crew","education","energy","healthcare","supply_chain_and_production",
  "project","resource","sports","job shop","transportation"].
  
  Unify Units
  
  * Unify all units in the problem. Through reasonable calculation, some value will be converted into a unified unit.
  * Currencies: keep symbol if present, else use a "currency" field and numeric value.
  * Distances, loads, power, inventory, etc., must include units where known.
  
  Completeness & Traceability
  
  * **All numeric quantities in prose** must **appear explicitly in the JSON** (never invent values).
  * Carry prose numbers into the correct section: `entities`, `measures`, `explicit_rules`, `implicit_signals`, `objective`, or `given_parameters.constants`.
  * **All table-derived numbers** must live in the single top-level `"tabular_values"` field; elsewhere you may reference conceptually, but don't duplicate raw values.
  * Output must be **minified**, preserve defined **key order**, and keep **arrays** even when empty.
  * objective must be a faithful paraphrase (or direct formalization) of the statement's explicit goal; if multiple is mentioned,You should mark two mainstream scalarization paradigms: 
  * Weighted-sum aggregation: merge objectives into a single scalar via convex weights.
  * Lexicographic (pre-emptive) optimization: strictly prioritize objectives and solve a hierarchical sequence of single-objective problems.
  
  Rules
  
  * When extracting numbers, keep units and any bounds(<=, >=); preserve original text in "explicit\_rules[*].text".
  * If both explicit and implicit constraints exist for the same idea. **just include explicit**
  * Do not alter the optimization target into a proxy metric or add regularizes not present in the statement. Unless the problem description clarify.
  * No invented weights/coefficients; only use weights if explicitly provided, otherwise omit them.
  * If the statement asks to "minimize/maximize …", ensure objective reproduces that exact quantity, and that the optimizer’s returned value/list is the final answer without additional mapping.
  

  Following is the entry:
  {Entry}
  

\end{Verbatim}
\end{quote}

\textbf{Output Schema:} The Extractor Agent produces a JSON object adhering to the following structure:

\begin{quoting}
  \begin{Verbatim}[fontsize=\small,breaklines=true,breaksymbol={},samepage=false]
      "domain_tags": "...",
      "problem_summary": "One-sentence paraphrase of the user's ask and scope.",
      "entities": {
        "IMPORTANT NOTE": "BELOW IS AN EXAMPLE. YOU SHOULD EXTRACT FROM THE STATEMENT ",
        "tasks_jobs": {
          "definition": "Work items to schedule/route/allocate.",
          "records": [
            {
              "id": "string",
              "name": "string|none",
              "start_time": "timestamp|relative|none",
              "end_time": "timestamp|relative|duration|none",
              "duration": "duration|none",
              "location_or_stage": "enum|string|none",
              "demand_or_load": "scalar|vector|none",
              "priority": "high|medium|low|numeric|none",
              "precedence": ["task_id", "..."]
            }
          ]
        },
        "resources": {
          "types": [
            {
              "name": "human|machine|vehicle|room_bed|inventory|energy|other as statement clarify",
              "attributes": ["skills", "shift_limit", "rest_rules", "capacity", "setup", "maintenance_windows", "depot", "range", "driver_link", "availability", "balance", "ramp_rate", "fairness"],
              "counts_limits": "describe available quantity, pools, or bounds if given"
            }
          ]
        },
        "background_loads": {
          "definition": "Time-varying but predictable demands that must be accounted for in resource constraints",
          "records": [
            {
              "id": "string",
              "name": "string|none",
              "load_type": "constant|cyclic|time_series|stochastic",
              "profile": {
                "pattern": "string|none",
                "base_value": "numeric|none",
                "variation": "numeric|none",
                "period": "duration|none",
                "duty_cycle": "ratio|none"
              },
              "time_window": {
                "start": "timestamp|relative|none",
                "end": "timestamp|relative|none"
              }
            }
          ]
        }
        "locations_nodes": {
          "space": [],
          "travel_or_transition": {
            "has_travel": true|false,
            "metric": "time|distance|cost|none"
          }
        }
      },
      "time_model": {
        "scale": "continuous|discrete",
        "granularity": "minutes|hours|slots|none",
        "horizon": {
          "start": "timestamp|relative|none",
          "end": "timestamp|relative|none"
        }
      },
      "entity_count_estimates": {
        "tasks": "int|none",
        "resources": "int|none",
        "locations": "int|none"
      },
      "measures": [
        {
          "name": "e.g., cost, lateness, utilization, on_time_rate, energy_use, distance, overtime",
          "value": "numeric|none",
          "unit": "currency|min|kWh|km|units|%|none",
          "scope": "task|resource|route|system",
          "hint": "how it is computed if stated"
        }
      ],
      "explicit_rules": [
        {
          "rid": "R1",
          "text": "Original sentence fragment from the user.",
          "type": "assignment|coverage|time_window|sequence|precedence|setup|travel|idle|break|capacity|balance|ramp|fairness|compatibility|inventory|energy|depot|return|multi_period|others",
          "applies_to": ["task_id|resource_id|type|all"],
          "hard": true
        },
        {
          "rid": "D1",
          "text": "Decision variables are well-defined and integral where stated.",
          "type": "domain",
          "applies_to": ["all"],
          "hard": true
        },
        {
          "rid": "L1",
          "text": "Each execution of an action corresponds to one trip/unit use.",
          "type": "assignment",
          "applies_to": ["all"],
          "hard": true
        },
        {
          "rid": "C1",
          "text": "Resource capacities must be respected.",
          "type": "capacity",
          "applies_to": ["all"],
          "hard": true
        },
        {
          "rid": "U1",
          "text": "Per-execution fixed and rate-based costs/consumptions apply.",
          "type": "energy",
          "applies_to": ["all"],
          "hard": true
        },
        {
          "rid": "P1",
          "text": "Penalty costs for various constraint violations as specified in objective",
          "type": "penalty_cost",
          "applies_to": ["all"],
          "hard": false
        }
    
      ],
      "implicit_signals": [
        "Inferred assumptions or likely missing constraints (e.g., resource scarcity, fairness rotation, vehicle must return to depot, rest times implied by labor law, ramp limits for generators, inventory cannot be negative, rooms have capacity 1, if not specific the machine is down at the beginning, the tank is empty initially etc.).",
        "Within each task, execution outcomes are independent across repetitions (if stated).",
        "Outcomes across different tasks are independent (if stated).",
        "Each execution covers exactly one unit of work for its task and action."
      ],
      "given_parameters": {
        "matrices_or_functions": [
          "travel_time(u,v)",
          "setup_time(i,j)",
          "capacity(r,t)",
          "energy_cost(t)",
          "base_success_prob(i,a)",
          "task_success_prob(i, x_i) = 1 - \\Pi_{a}(1 - base_success_prob(i,a))^{x_{i,a}}",
          "system_success_prob(x) = P(\\Sigma_i Z_i \\geq k) with Z_i ~ Bernoulli(task_success_prob(i,x_i))",
          "consumption(i,a,r) = fixed[a,r] + \\Sigma_m rate[a,r,m]\\cdot metric[i,m]"
        ],
        "tables_or_data": [
          "task_table",
          "resource_table",
          "shift_calendar",
          "distance_matrix",
          "success_prob_table",
          "resource_cost_table"
        ],
        "constants": {
          "K": 500,
          "cost_per_min": 10,
          "max_overtime_hr": 2
        }
      },
      "objective": {
        "goal": "min_cost|max_throughput|max_utility|min_lateness|multiobjective| (as statement clarify. Strictly restate the optimization objective in the question (e.g., \"maximize P(Sum_i Z_i \\geq k)\",'minimize the total delay in minutes', 'maximize the net profit').)",
        "target_cost": "numeric|expression|none",
        "components": [
          {
            "name": "activation_cost",
            "sign": "+",
            "weight": 1.0,
            "applies_to": "resource"
          },
          {
            "name": "travel_cost",
            "sign": "+",
            "weight": 1.0,
            "applies_to": "edge"
          },
          {
            "name": "overtime_penalty",
            "sign": "+",
            "weight": 1.0,
            "applies_to": "resource_time"
          },
          {
            "name": "lateness_penalty",
            "sign": "+",
            "weight": 1.0,
            "applies_to": "task"
          },
          {
            "name": "revenue_or_utility",
            "sign": "-",
            "weight": 1.0,
            "applies_to": "task|assignment"
          }
        ]
      },
      "tabular_values": [
        {
          "name": "string (table title or inferred)",
          "schema": ["col1", "col2", "..."],
          "units": {
            "col_name": "unit|none"
          },
          "rows": [
            ["compact row values, strings or numbers in order"],
            ["..."]
          ],
          "notes": "optional short note or empty string"
        }
      ],
      "domain_examples": {
        "IMPORTANT NOTE": "THIS FIELD SHOULD NOT BE INCLUDED IN THE OUTPUT",
        "crew": [
          "Each flight requires 2 pilots and 3 attendants",
          "Crew rest \\geq 12h between duties"
        ],
        "education": [
          "A teacher cannot teach two classes simultaneously",
          "Room capacity cannot be exceeded"
        ],
        "energy": [
          "Ramp rate <= 20 MW/min",
          "Unit minimum up/down times apply"
        ],
        "healthcare": [
          "Surgeon needs 30 min turnover between cases",
          "ICU beds have capacity 1"
        ],
        "supply_chain_and_production": [
          "Demand must be met weekly",
          "Inventory balance: inv(t)=inv(t-1)+inbound-outbound"
        ],
        "project": [
          "Task B depends on Task A completion",
          "Finish by milestone date"
        ],
        "resource": [
          "Do not double-book a resource",
          "Respect skill-to-task compatibility"
        ],
        "sports": [
          "No player plays two matches at the same time",
          "Home/away balance over season"
        ],
        "job shop": [
          "Each machine processes one job at a time",
          "Operations follow fixed routing sequence"
        ],
        "transportation": [
          "Vehicle starts/ends at depot",
          "Route duration <= driver's shift"
        ]
      }
  
  \end{Verbatim}
\end{quoting}

\subsection{The Mapper Agent}
\label{app:mapper}

The Mapper Agent transforms structured problem elements from the Extractor into formal modeling representations. 
It performs constraint template mapping, CIR clustering, paradigm selection, based on domain knowledge.
Below we present the complete system prompt used for the Mapper Agent:

\begin{quote}
\begin{Verbatim}[fontsize=\small,breaklines=true,breaksymbol={},samepage=false]
  You are an excellent expert and a top researcher of Operation optimization problems. You are responde to generate **Modeling Mapper** for **{domain_tag}** scheduling, emit concise rationale fields (why/assumptions/limits) inside the JSON rather than free-form chain-of-thought; use verifiable, model-side justifications only. Read the **Extraction** and emit a **single Mapper JSON** that maps natural-language operation rules to: (a) a rich **catalog of constraint candidate modeling paradigms for operation rules **, (b) **sets and variables**, (c) optional **graph/time indexing recipes**, and (d) an explicit **cluster relationship** among operation rules.
  The mapper must (i) exactly echo IDs/units/time granularity from Extraction; (ii) list any missing fields as data_gaps with placeholders and bounds.  
  
  ## Original_Problem
  {Original_Problem}
  
  ## Extraction
  {Extraction}
  
  ## Domain Knowledge Infusion
  
  The domain knowledge supplies a sufficiently comprehensive inventory of candidate paradigms, detailing name, applicability, advantages, and risks; the names in top_paradigms must be selected from the domain knowledge. If there is no suitable paradigm, make a judgment based on a similar form.
  Use the following to guide paradigm selection, variable/constraint patterns, and tight bounds:
  {Domain_Knowledge}
  
   
  Your Mapper must recognize these patterns and set **fit_score** accordingly.
  
  ## Requirements
  * Mapper must strictly adhere to Extractor semantics without domain-knowledge-based simplifications. Domain Knowledge is for paradigm selection only, not for modifying specific rules.
  
  * Use Extraction content when possible (IDs, `rid`s, units, time granularity such as `{time_model.granularity}`). Map every Extraction rule to one or more **constraint templates** with `source_rule_ids` (the Extraction `rid`s).
  
  * Implicit Constraints Reasoning: Analyze the problem structure and "implicitConstraints" of domain knowledge to intelligently infer which implicit constraints should be included based on the problem characteristics and add them to the operation rules. Apply constraint templates conditionally, considering the operational context and logical requirements of the problem.
  
  * When selecting paradigms, integrate insights from **Domain_Knowledge** to weight or elucidate the fit_scores. explicitly consider computational factors such as the number of entities, time granularity, and horizon length from the Extraction. Provide clear justification for paradigm fit_scores based on task properties and resource constraints.
  
  "Constraint Fidelity Requirement": "For  operational constraints where Domain Knowledge provides detailed formulations, the Mapper must use the full formulations from without simplification. Maintain the constraints such as  boundary conditions, prohibited arcs/self loops, and special node handling exactly as specified."
  
  * cir_implementation_clusters: Aggregate CIR implementations that exhibit similarity or relatedness into clusters; each cluster shall encompass:
  
    ** class_name, rule_ids (comprising a list of rule identifiers from the Extraction phase), rationale, and relationship_strength (describing the overall strong interrelations within the cluster).
    ** top_paradigms: Propose at most three optimal paradigms (drawn from the paradigm library) ranked by suitability, with each including name, fit_score within the interval [0,1], why, strengths, and risks. 
  
  * sets_and_indices: Formally define sets (for example, use notations such as J/K/T/...when applicable) in alignment with the Extraction phase and the constraint paradigms; the set T must be annotated with {time_model.granularity}. 
  
  * parameters: Centrally declare parameters including p[j,m], r[j,m,k], R[k], B, H, M_big, etc., each accompanied by unit and source.
  
  * variable_plan: The mapper should define variables according to the characteristics of tasks from the Extraction. Organize variable families by paradigmw (e.g., time-indexed, continuous-time, event-based, arc-flow), with each variable specifying domain and meaning.
  
  * Objective fidelity & direct answer emission. The optimization objective must exactly encode the objective as stated in the Extraction (no surrogate or proxy objectives). 
  
  * Keep **units consistent**; echo time granularity from the Extraction and define `H` (horizon) if needed.
  
  * If contention_pairs_vars != empty, you MUST instantiate priority_var_pi_{p,q} for all (p,q) in contention_pairs_vars and include BOTH disjunction constraints under "Priority-driven disjunctions for contention pairs" in the Mapper output.
  
  ## Output Format

  Return **exactly one JSON object** (pretty-printed, no comments) following this schema:
  
\end{Verbatim}
\end{quote}

Output Schema: The Mapper Agent produces a JSON object adhering to the following structure:

\begin{quoting}
  \begin{Verbatim}[fontsize=\small,breaklines=true,breaksymbol={},samepage=false]
    
      "metadata": {
        "domain_tag": "{domain_tag}",
        "time_model": {"granularity": "{time_model.granularity}", "horizon_hint": "{H_if_known_or_placeholder}"},
        "notes": ["All IDs and units reflect the Extraction.",  "Objective must exactly encode the question in the Extraction; final optimization result must be the direct answer (no intermediate text)."]
      },
    
      "sets_and_indices": {
        "J": {"desc": "activities/tasks", "from": "{Extraction.sets.J}"},
        "L": {"desc": "discrete resource/level indices for value-selection (task- or resource-specific)", "from": "{Extraction.sets.L_or_levels}", "optional": true},
        "K": {"desc": "resources (renewable/non-renewable)", "from": "{Extraction.sets.K}"},
        "M": {"desc": "modes per activity", "from": "{Extraction.sets.M}"},
        "S": {"desc": "precedence pairs or scenarios", "from": "{Extraction.sets.S}"},
        "T": {"desc": "time slots if discrete", "from": "{Extraction.sets.T}", "granularity": "{time_model.granularity}"}
      },
    
      "parameters": {
        "p[j,m]": {"unit": "{time_unit}", "desc": "duration of activity j in mode m", "source": "{Extraction.params.p_jm}"},
        "r[j,m,k]": {"unit": "{resource_unit}", "desc": "demand of activity j,m on resource k", "source": "{Extraction.params.r_jmk}"},
        "R[k]": {"unit": "{resource_unit}", "desc": "capacity of resource k", "source": "{Extraction.params.R_k}"},
        "c[j,m]": {"unit": "{currency}", "desc": "mode cost (optional)", "source": "{Extraction.params.c_jm}"},
        "B": {"unit": "{currency_or_units}", "desc": "budget or non-renewable limit", "source": "{Extraction.params.B}"},
        "H": {"unit": "{time_unit}", "desc": "planning horizon upper bound", "source": "{Extraction.params.H}"},
        "M_big": {"unit": "{time_unit}", "desc": "tight big-M; use ES/LS bounds", "source": "{Extraction.params.M_big}"}
      },
    
      "variable_plan": {
        "time_indexed": {
          "x[j,m,t]": {"domain": "{0,1}", "meaning": "activity j in mode m starts at time t"},
          "z[j,m,t]": {"domain": "{0,1}", "meaning": "activity j,m active at time t (convolution of x and p)"},
          "C_max": {"domain": "R_+", "meaning": "makespan"}
        },
        "continuous_time": {
          "S[j]": {"domain": "R_+", "meaning": "start time"},
          "C[j]": {"domain": "R_+", "meaning": "completion time"},
          "y[i,j,k]": {"domain": "{0,1}", "meaning": "on resource k, i precedes j"}
        },
        "event_based": {
          "E[e]": {"domain": "R_+", "meaning": "time of event e"},
          "a[j,e]": {"domain": "{0,1}", "meaning": "activity j assigned to event e"}
        },
        "arc_flow": {
          "x[i,j]": {"domain": "{0,1}", "meaning": "select compatibility arc i->j in DAG"},
          "path_used": {"domain": "Z_+", "meaning": "number of selected s->t paths if needed"}
        }
      },
      "constraint_templates": [
        {
          "rule_ref": "precedence",
          "source_rule_ids": ["{rid_precedence}"],
          "intent": "a task cannot start before all predecessors complete",
          "time_indexed_form": "\\forall(i,j) in Pred: \\sum_{m,t} (t + p[i,m]) x[i,m,t] \\leq \\sum_{m,t} t x[j,m,t]",
          "continuous_time_form": "\\forall(i,j) in Pred: S[j] \\geq C[i]",
          "event_based_form": "Event index of j not earlier than event index end of i"
        },
        {
          "rule_ref": "reliability_at_least_M",
          "source_rule_ids": ["{rid_reliability}"],
          "intent": "achieve at least M successful outcomes with maximum probability or guaranteed confidence",
          "time_indexed_form": "Define per-target success probability P_j via value-selection over (bomb count, type); compute Poisson-binomial tail Pr(\\Sigma Z_j \\geq M) exactly via convolution or approximate via piecewise linear bounds.",
          "continuous_time_form": "Same P_j construction; reliability either in the objective (maximize) or via chance constraint Pr(\\Sigma Z_j \\geq M) \\geq 1 - \\alpha using Boole/Bonferroni or CVaR-type linearizations."
        },
        {
          "rule_ref": "renewable_capacity",
          "source_rule_ids": ["{rid_capacity}"],
          "intent": "at any time, total usage of each renewable resource does not exceed capacity",
          "time_indexed_form": "\\forall t,k: \\sum_{j,m} r[j,m,k]\\cdot z[j,m,t] \\leq R[k]",
          "continuous_time_form": "Profile inequality: \\sum_{j active at t} r[j,m_j,k] \\leq R[k]"
        },
        {
          "rule_ref": "discrete_multiple_of_q",
          "source_rule_ids": ["{rid_capacity_or_integrality_from_Extraction}"],
          "intent": "encode x in Z*q with gating and capacity on the true scaled quantity",
          "continuous_time_form": "Introduce u in Z_+, and enforce x = q*u. If on/off y gates x, bound u by 0 <= u <= U*y and (optionally) u >= y when positive minimum is needed; impose capacity as sum(q*u) <= Cap. **Do NOT write sum(x*y) <= Cap.**"
        },
        {
          "rule_ref": "cartesian_value_selection_bw_compute",
          "source_rule_ids": ["{rid_capacity}", "{rid_modes}"],
          "intent": "select a joint (bandwidth, compute) level for offloaded tasks; precompute service times; enforce caps without x*y products",
          "continuous_time_form": "Define B_vals and K_vals; add w[j,b,k] in {0,1}. sum_{b,k} w[j,b,k] = y_offload[j]. Service time: t_scn[j] = t_loc[j]*(1-y_offload[j]) + sum_{b,k} t_off[j,scn,b,k]*w[j,b,k]. Caps: sum_{j,b,k} b*w[j,b,k] <= B_max; sum_{j,b,k} (c_chunk*k)*w[j,b,k] <= C_max.",
          "time_indexed_form": "same selection with period-usage if needed",
          "notes": "avoid variables divided by decisions and avoid capacity terms like \\sum(BW[j]\\cdot offload[j]) or \\sum(C[j]\\cdot offload[j])"
        },
        {
          "rule_ref": "non_preemption_and_time_windows",
          "source_rule_ids": ["{rid_nonpreempt}", "{rid_windows}"],
          "intent": "each activity runs non-preemptively within its allowed window",
          "time_indexed_form": "x[j,m,t] = 0 outside [ES[j], LS[j]]; z derived by convolution; no splitting",
          "continuous_time_form": "S[j] in [ES[j], LS[j]], C[j] = S[j] + p[j,m_j]"
        },
        {
          "rule_ref": "mode_selection",
          "source_rule_ids": ["{rid_modes}"],
          "intent": "choose exactly one mode per activity",
          "time_indexed_form": "\\forall j: \\sum_{m,t} x[j,m,t] = 1",
          "continuous_time_form": "Binary mode selectors with SOS1 or assignment"
        },
        {
          "rule_ref": "non_renewable_budget",
          "source_rule_ids": ["{rid_budget}"],
          "intent": "consumption of non-renewables within budget",
          "time_indexed_form": "\\sum_{j,m} r[j,m,k]\\cdot(\\sum_t x[j,m,t]) \\leq R[k] for non-renewable k"
        },
        {
          "rule_ref": "makespan_definition",
          "source_rule_ids": ["{rid_makespan}"],
          "intent": "define C_max",
          "time_indexed_form": "\\forall j: C_max \\geq \\sum_{m,t} (t + p[j,m]) x[j,m,t]",
          "continuous_time_form": "\\forall j: C_max \\geq C[j]"
        }
      ],
      "constraint_clusters": [
        {
          "class_name": "precedence_and_execution_timing",
          "rule_ids": ["{rid_precedence}", "{rid_nonpreempt}", "{rid_windows}"],
          "relationship_strength": "strong",
          "rationale": "jointly determine feasible start/finish structure",
          "top_paradigms": [
            {
              "name": "continuous_time_milp",
              "fit_score": 0.0,
              "why": "compact time, strong for precedence + non-overlap with tight windows",
              "strengths": ["compact time scale","valid inequalities available"],
              "risks": ["needs tight big-M/indicators"]
            },
            {
              "name": "time_indexed_milp",
              "fit_score": 0.0,
              "why": "no big-M for overlap; transparent windows via ES/LS",
              "strengths": ["no-overlap without big-M","easy window filtering"],
              "risks": ["scales with |T|"]
            },
            {
              "name": "event_based_milp",
              "fit_score": 0.0,
              "why": "few distinct events; precedence-dominant",
              "strengths": ["fewer indices","sequence-centric"],
              "risks": ["event count selection delicate"]
            }
          ]
        },
        {
          "class_name": "renewable_and_nonrenewable_capacity",
          "rule_ids": ["{rid_capacity}", "{rid_budget}"],
          "relationship_strength": "strong",
          "rationale": "dynamic renewable limits and cumulative budgets",
          "top_paradigms": [
            {
              "name": "time_indexed_milp",
              "fit_score": 0.0,
              "why": "slot-wise capacity accounting is direct and strong",
              "strengths": ["exact capacity profile"],
              "risks": ["large |T|"]
            },
            {
              "name": "cumulative_constraint_cp",
              "fit_score": 0.0,
              "why": "global cumulative handles calendars and side constraints",
              "strengths": ["powerful propagation"],
              "risks": ["weak LP bounds; performance variance"]
            },
            {
              "name": "bucket_indexed_energy_model",
              "fit_score": 0.0,
              "why": "long horizons where aggregation acceptable",
              "strengths": ["smaller model size"],
              "risks": ["intra-bucket approximation"]
            }
          ]
        },
        {
          "class_name": "mode_and_cost_structure",
          "rule_ids": ["{rid_modes}", "{rid_costs}"],
          "relationship_strength": "medium",
          "rationale": "mode choice impacts duration/capacity and costs",
          "top_paradigms": [
            {
              "name": "time_indexed_milp",
              "fit_score": 0.0,
              "why": "mode–time coupling is explicit via x[j,m,t]",
              "strengths": ["clear mode selection"],
              "risks": ["variable count growth"]
            },
            {
              "name": "continuous_time_milp",
              "fit_score": 0.0,
              "why": "mode via binary selectors with linear timings",
              "strengths": ["compact time representation"],
              "risks": ["mode-dependent big-M tuning"]
            },
            {
              "name": "arc_flow_path_decomposition",
              "fit_score": 0.0,
              "why": "when modes imply alternative feasible arcs/paths",
              "strengths": ["implicit non-overlap via arcs","CG-friendly"],
              "risks": ["compatibility graph size"]
            }
          ]
        }
      ],
    "objective_template": {
    "primary": "{exact_objective_from_Extraction_no_surrogates}",
    "unit_conversion": "All units must be consistent with Extraction. If unit conversion is required, use: conversion_factor = target_unit/source_unit",
    "direct_answer": "The optimization result must be the final answer without additional conversion steps",
        "examples": [
          {"name": "maximize_reliability_M_of_N", "expression": "maximize P{\\Sigma_j Z_j \\geq M} or enforce P{\\Sigma_j Z_j \\geq M} \\geq 1 - \\alpha"},
          {"name": "minimize_makespan", "expression": "min C_max"},
          {"name": "minimize_total_cost", "expression": "mode costs + resource/penalty costs"},
          {"name": "minimize_expected_makespan", "expression": "\\sum_s \\pi_s \\cdot C_max^s"}
        ]
      },
    
      "graph_schema": {
        "applicability": "high for compatibility networks or depot-style flows",
        "node_types": [{"name": "source", "singleton": true}, {"name": "task"}, {"name": "sink", "singleton": true}],
        "edge_types": [
          {"name": "compatibility", "from": "task", "to": "task", "feasibility_condition": "finish(i)+setup(i->j) \\leq start(j)"}
        ],
        "edge_cost_recipe": {"compatibility": "setup/idle/transition costs as specified in Extraction"}
      },
    
      "time_indexing": {
        "granularity": "{time_model.granularity}",
        "T_definition": "{if_discrete: e.g., T = 0..H}",
        "feasible_start_filter": "apply ES/LS windows from Extraction to restrict x[j,m,t]"
      }


  \end{Verbatim}
\end{quoting}

\subsection{The Formalizer Agent}
\label{app:formalizer}

The Formalizer Agent converts abstract constraint archetypes and paradigm recommendations into 
executable mathematical formulations. 
Below we present the complete system prompt used for the Formalizer Agent:

\begin{quote}
\begin{Verbatim}[fontsize=\small,breaklines=true,breaksymbol={},samepage=false]
  You are an expert operations research modeler.

  I will provide you the Extraction and Mapper of the problem.

  Original problem description:
  {Original_problem_description}
  
  Extraction: a strict JSON produced by the Extractor (contains domain tags, problem summary, entities, time model, measures, explicit_rules with rid, given_parameters, objective, tabular_values, etc.).
  {Extraction}
  
  Mapper: a strict JSON produced by the Mapper (contains metadata, variable plans, constraint templates with source_rule_ids that reference Extraction rids, sets/indices, parameters, operation_rule_clusters with prefer paradigm, relationship_matrix, objective_template, graph/time indexing recipes).
  {Mapper}
  
  
  
  Your tasks:
  1. Present the complete mathematical formulation.
  2. Provide executable Gurobi Python code consistent with the mathematical formulation and with the parameters/data implied by the problem description and Extraction.
  
  
  ## Output Format:
  Use exactly these markers and structure:
  
  #problem_formulation_start#
  [Complete mathematical formulation with sets, variables, parameters, constraints, objective]
  #problem_formulation_end#
  
  #Gurobi_code_start#
  [Complete Gurobi Python code implementing the mathematical formulation]
  #Gurobi_code_end#
  
\end{Verbatim}
\end{quote}

\subsection{Checker Agent Verification Prompts}
\label{app:checker-prompts}

The Checker Agent performs validation through three distinct phases, each with targeted verification prompts. These prompts are executed after the respective agent completes its main task to catch inconsistencies and prevent error propagation.

\paragraph{Extractor-to-Mapper Verification}

\begin{quote}
\begin{Verbatim}[fontsize=\small,breaklines=true,breaksymbol={},samepage=false]
EXTRACTOR VALIDATION: Check for internal consistency in extracted data:

1. Verify all extracted parameters (time windows, resource demands, etc.) 
   align with problem description (e.g., non-negative values, valid ranges)
2. Ensure entity counts match problem description scope

If any inconsistency is found, output "VALIDATION FAILED: [specific issue]"
Otherwise, output "VALIDATION PASSED"
\end{Verbatim}
\end{quote}

\paragraph{Mapper-to-Formalizer Verification}

\begin{quote}
\begin{Verbatim}[fontsize=\small,breaklines=true,breaksymbol={},samepage=false]
MAPPER VALIDATION: Audit CIR implementation completeness:

For each operation rule (by rule ID) from the Extractor output:
1. Check that at least one CIR implementation exists
2. Verify each CIR has a specified modeling paradigm

If any rule lacks CIR implementation, output "VALIDATION FAILED: Rule [rule ID] missing CIR implementation"
Otherwise, output "VALIDATION PASSED"
\end{Verbatim}
\end{quote}

\paragraph{Formalization Verification}

\begin{quote}
\begin{Verbatim}[fontsize=\small,breaklines=true,breaksymbol={},samepage=false]
FORMALIZER VALIDATION: Verify constraint translation completeness:

Cross-reference each operation rule (explicit and validated implicit):
1. Ensure each rule has at least one corresponding mathematical constraint
2. Verify all constraints are implemented in the generated code

If any rule lacks constraint implementation, output "VALIDATION FAILED: Rule [rule ID] missing constraint"
Otherwise, output "VALIDATION PASSED"
\end{Verbatim}
\end{quote}

\subsection{Standard Prompting Baseline}
\label{app:standard-prompt}

For comparison purposes, we also evaluate proprietary LLMs (GPT-5 and Grok-4) using standard prompting. Below we present the complete system prompt used for the standard prompting baseline:

\begin{quote}
\begin{Verbatim}[fontsize=\small,breaklines=true,breaksymbol={},samepage=false]
You are an expert operations research modeler. 

I will provide you the problem description.

Problem description: {problem_statement}

you should help me to do the following tasks:
First, present the complete mathematical formulation, 
and then, provide executable Gurobi Python code with the parameter in the problem description.

## Output Format:

Use exactly these markers and structure:

#problem_formulation_start#
[Problem formulation]
#problem_formulation_end#

#Gurobi_code_start#
[Complete Gurobi code with consistent content in problem and formulation]
#Gurobi_code_end#
\end{Verbatim}
\end{quote}

\section{CIR domain knowledge construction}
\label{app:cir}
Following prior studies, our CIR construction spans multiple key optimization 
domains. \cref{tab:cir-domains} summarizes the abbreviations and references for these 
domain classes. For each domain, we analyzed canonical problem structures, 
common operational rules, and typical constraint patterns based on high-quality literature review papers. 

\begin{table}[ht]
\caption{Abbreviations, Full Names, and References for the CIR Domain Classes}
\label{tab:cir-domains}
\centering
\begin{small}
\begin{tabular}{lll}
\toprule
\textbf{Abbreviation} & \textbf{Full Name} & \textbf{Reference} \\
\midrule
CA & Crew Assignment & \cite{heil2020railway} \\
EDUS & Education & \cite{pillay2016review} \\
ES & Energy System & \cite{merkert2015scheduling} \\
HA & Healthcare Allocation & \cite{cayirli2003outpatient} \\
SCP & Supply Chain and Production & \cite{jans2008modeling} \\
PM & Project Management & \cite{sanchez2023resource} \\
RA & Resource Allocation & \cite{zhan2015cloud} \\
SPOS & Sports Planning & \cite{ribeiro2012sports} \\
JS & Job Shop & \cite{dauzere2024flexible} \\
TS & Transportation & \cite{perumal2022electric} \\
\bottomrule
\end{tabular}
\end{small}
\end{table}

\subsection{CIR Examples}
\label{app:cir:examples}

This subsection presents examples of CIR domain knowledge construction for different optimization problem domains.

\subsubsection{Transportation domain CIR Example}
\label{app:cir:transportation}

\begin{tcolorbox}[
  colback=gray!5!white,
  colframe=gray!75!black,
  boxrule=0.5pt,
  arc=2pt,
  left=6pt,
  right=6pt,
  top=6pt,
  bottom=6pt,
  breakable
]
\textbf{Intent:} Temporal consistency with depot boundaries

\textbf{Continuous-Time Formulation:}
\begin{itemize}
  \item $\forall k \in \mathcal{K}, \forall j \in \mathcal{C}: \text{node\_visit\_time}[j,k] \geq \text{depot\_start\_time}[k] + \text{deadhead\_time}[\text{depot},j] - M_{\text{big}} \cdot (1 - \text{vehicle\_flow}[\text{depot},j,k])$
  
  \item $\forall k \in \mathcal{K}, \forall i,j \in \mathcal{C}, i \neq j: \text{node\_visit\_time}[j,k] \geq \text{node\_visit\_time}[i,k] + \text{service\_duration}[i] + \text{deadhead\_time}[i,j] - M_{\text{big}} \cdot (1 - \text{vehicle\_flow}[i,j,k])$
  
  \item $\forall k \in \mathcal{K}, \forall i \in \mathcal{C}: \text{depot\_end\_time}[k] \geq \text{node\_visit\_time}[i,k] + \text{service\_duration}[i] + \text{deadhead\_time}[i,\text{depot}] - M_{\text{big}} \cdot (1 - \text{vehicle\_flow}[i,\text{depot},k])$
  
  \item $\forall k \in \mathcal{K}: \text{depot\_end\_time}[k] \geq \text{depot\_start\_time}[k]$
  
  \item $\forall k \in \mathcal{K}: \text{depot\_start\_time}[k] \geq \text{time\_window\_lower}[\text{depot}]$
  
  \item $\forall k \in \mathcal{K}: \text{depot\_end\_time}[k] \leq \text{time\_window\_upper}[\text{depot}]$
\end{itemize}

\textbf{Time-Indexed Formulation:} (Not applicable for this constraint)

\begin{quote}
\textit{Notes:} For single-depot routing problems, use two separate time variables to represent the departure time and return time of vehicles from the depot to avoid time conflicts. $\text{depot\_start\_time}[k]$ denotes the time when vehicle $k$ departs from the depot, and $\text{depot\_end\_time}[k]$ denotes the time when vehicle $k$ returns to the depot.
\end{quote}
\end{tcolorbox}

\subsubsection{Job Shop domain CIR Example}
\label{app:cir:jobshop}

\begin{tcolorbox}[
  colback=gray!5!white,
  colframe=gray!75!black,
  boxrule=0.5pt,
  arc=2pt,
  left=6pt,
  right=6pt,
  top=6pt,
  bottom=6pt,
  breakable
]
\textbf{Intent:} Machine capacity (no-overlap) constraints

\textbf{Continuous-Time Formulation:}
\begin{itemize}
  \item $\forall i \in \mathrm{StageSet}, j \neq j': \mathrm{completionTime}_{j,i} \geq \mathrm{completionTime}_{j',i} + \mathrm{procTime}_{j,i} - \mathrm{bigM} \cdot (1 - \mathrm{orderAfterPair}_{i,j,j'})$
  
  \item $\forall i \in \mathrm{StageSet}, j \neq j': \mathrm{completionTime}_{j',i} \geq \mathrm{completionTime}_{j,i} + \mathrm{procTime}_{j',i} - \mathrm{bigM} \cdot \mathrm{orderAfterPair}_{i,j,j'}$
  
  \item $\forall i \in \mathrm{StageSet}, j \neq j', k \in \mathrm{MachineSet}_i: \mathrm{completionTime}_{j,i} \geq \mathrm{completionTime}_{j',i} + \mathrm{procTime}_{j,i} - \mathrm{bigM} \cdot (3 - \mathrm{orderAfterPair}_{i,j,j'} - \mathrm{assignJobToMachine}_{j,i,k} - \mathrm{assignJobToMachine}_{j',i,k})$
\end{itemize}

\textbf{Time-Indexed Formulation:}
\begin{itemize}
  \item $\forall i \in \mathrm{StageSet}, t \in \mathrm{TimeSlots}: \sum_{j \in \mathrm{JobSet}} \mathrm{processAtTime}_{j,i,t} \leq |\mathrm{MachineSet}_i|$
  
  \item $\forall i \in \mathrm{StageSet}, k \in \mathrm{MachineSet}_i, t \in \mathrm{TimeSlots}: \sum_{j \in \mathrm{JobSet}} \mathrm{processOnMachineAtTime}_{j,i,k,t} \leq 1$
\end{itemize}

\begin{quote}
\textit{Notes:} Disjunctive (big-M) or cumulative time-indexed constraints ensure capacity is not exceeded. Pairwise order variables apply only when operations share a machine.
\end{quote}
\end{tcolorbox}

\subsubsection{Healthcare domain CIR Example}
\label{app:cir:healthcare}

\begin{tcolorbox}[
  colback=gray!5!white,
  colframe=gray!75!black,
  boxrule=0.5pt,
  arc=2pt,
  left=6pt,
  right=6pt,
  top=6pt,
  bottom=6pt,
  breakable
]
\textbf{Intent:} Shift Assignment Limits

\textbf{Continuous-Time Formulation:}
\begin{itemize}
  \item $\sum_{s\in\mathcal{Shifts}} \mathrm{assignShift}_{i,d,s} \leq 1 \quad \forall i\in\mathcal{Staff}, d\in\mathcal{Days}$
\end{itemize}

\textbf{Time-Indexed Formulation:}
\begin{itemize}
  \item $\sum_{s\in\mathcal{Shifts}} \mathrm{assignShift}_{i,d,s} \leq 1 \quad \forall i\in\mathcal{Staff}, d\in\mathcal{Days}$
\end{itemize}

\begin{quote}
\textit{Notes:} Each staff member is assigned at most one shift per day, ensuring basic workforce scheduling feasibility. This constraint is fundamental to personnel rostering across healthcare domains.
\end{quote}
\end{tcolorbox}

\section{ORCOpt-Bench: Operation Rule to Constraint Optimization Benchmark}
\label{app:orcoopt}

To systematically evaluate the capability of LLMs in translating natural language descriptions 
of optimization problems into mathematical models and executable code, we introduce the Operation 
Rule to Constraint Optimization Benchmark (ORCOpt-Bench). Unlike domain-specific benchmarks, ORCOpt-Bench 
spans multiple optimization domains, testing the generalization capability of LLMs across different problem types. The ORCOpt-Bench was constructed through a two-step process:

\textbf{Seed Problem Collection:} We created 20 optimization problems across the 10 domains 
listed in \cref{tab:orcopt-bench}, drawing from both academic literature and real-world 
industry cases obtained through our previous projects.
The benchmark comprises problems selected for their operational rules requiring decomposition 
into multiple constraints or exhibiting dependencies on other rules or parameters.
By integrating problems from academic literature and industrial case studies, 
the benchmark balances theoretical rigor with practical applicability.
To prevent data leakage and assess generalization, the benchmark 
problems (\cref{tab:orcopt-bench}) are sourced from application papers 
distinct from the review articles used for CIR construction (\cref{tab:cir-domains}).

\begin{table}[ht]
\caption{Problem Classes and References in ORCOpt-Bench}
\label{tab:orcopt-bench}
\centering
\begin{small}
\begin{tabular}{ll}
\toprule
\textbf{Problem} & \textbf{Reference} \\
\midrule
Airline Crew Rostering & \cite{maenhout2010hybrid} \\
University Exam Timetabling & \cite{bazari2023modeling} \\
Energy Demand Management & \cite{tuysuz2024real} \\
Diagnostic Facility Management & \cite{green2006managing} \\
Multi-Level Lot Sizing & \cite{sahling2009solving} \\
Resource-Constrained Project Management & \cite{deblaere2011proactive} \\
UAV Edge Computing Networks & \cite{zhou2023priority} \\
Sports Event Operations & \cite{zhang2025optimization} \\
Autonomous Manufacturing Systems & \cite{iwamura2010study} \\
Vehicle routing/Activity Chain Optimization & \cite{esztergar2021toward} \\
\bottomrule
\end{tabular}
\end{small}
\end{table}

\textbf{Augmentation and Variant Generation:} To enhance the diversity, robustness, and difficulty 
spectrum of the benchmark, we applied systematic augmentation strategies to the seed problems:

•	Constraint Addition/Removal: We modified the complexity of problems by adding or removing 
one or two core operational rules. This creates a gradient of difficulty, testing the framework's 
ability to handle both simplified scenarios (with fewer core rules) and more constrained, complex 
scenarios.

•	Objective Transformation: We altered the problem objectives to assess generalization across 
different performance measures. This includes transforming the goal, for example, from pure cost 
minimization to profit maximization, or by introducing penalty terms for soft constraints.

This process resulted in a final benchmark comprising 50 problems, primarily composed of 
Mixed-Integer Linear Programming (MILP) and Quadratic Programming (QP) formulations. 
The dataset's composition is illustrated in \cref{fig:benchmark-distribution} 
and \cref{fig:constraint-types-distribution}, which 
visualizes the distribution of problems across domains and constraint 
category counts, highlighting the structural diversity and complexity of the benchmark.

\begin{figure}[ht]
  \setlength{\abovecaptionskip}{0pt}
  \setlength{\belowcaptionskip}{5pt}
  \vspace{-0.1in}
  \begin{center}
    \includegraphics[width=0.5\columnwidth]{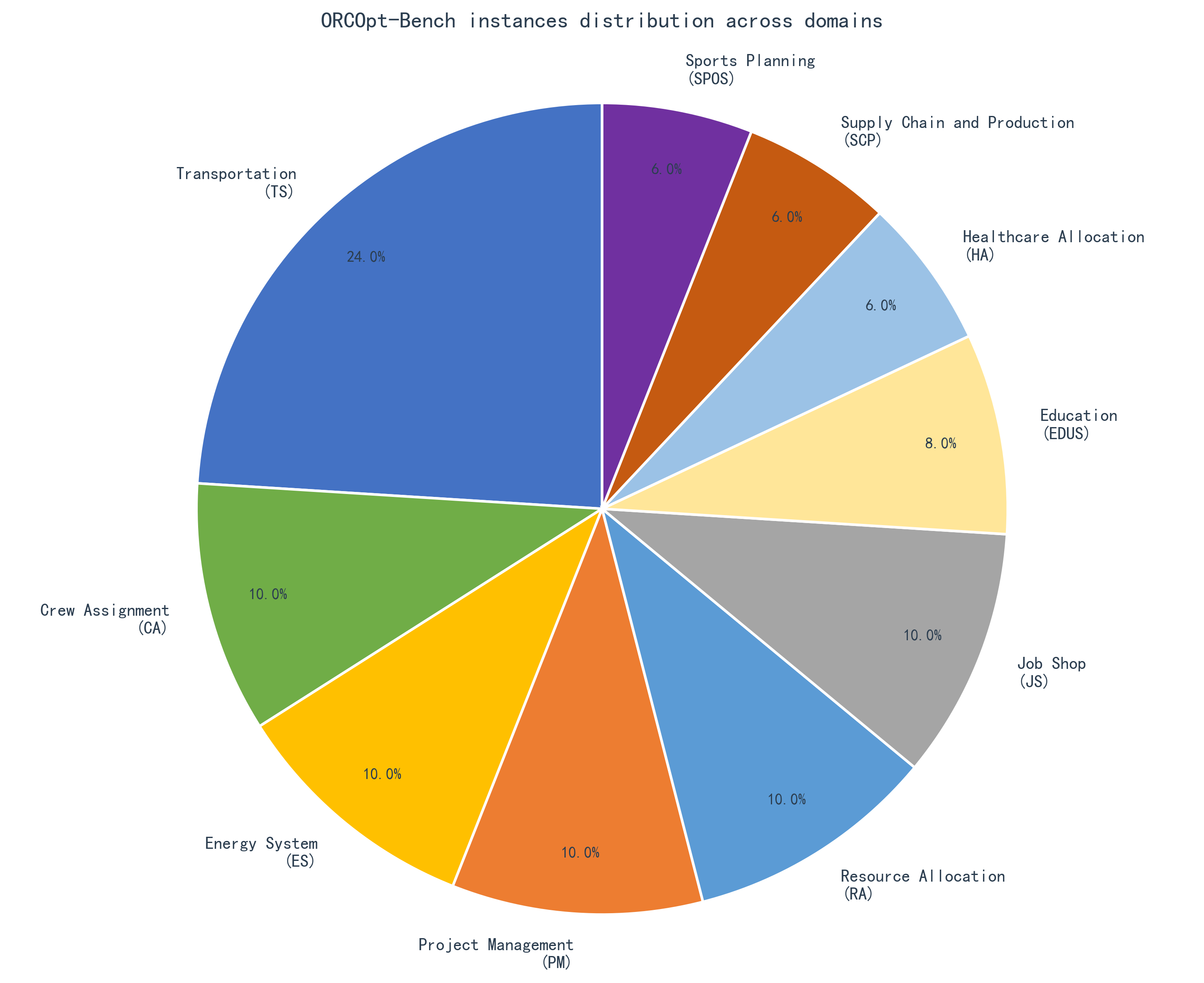}
    \caption{
      Distribution of problems across domains and constraint category counts in ORCOpt-Bench.
    }
    \label{fig:benchmark-distribution}
  \end{center}
  \vspace{-0.1in}
\end{figure}

\begin{figure}[ht]
  \centering
  \includegraphics[width=0.8\textwidth]{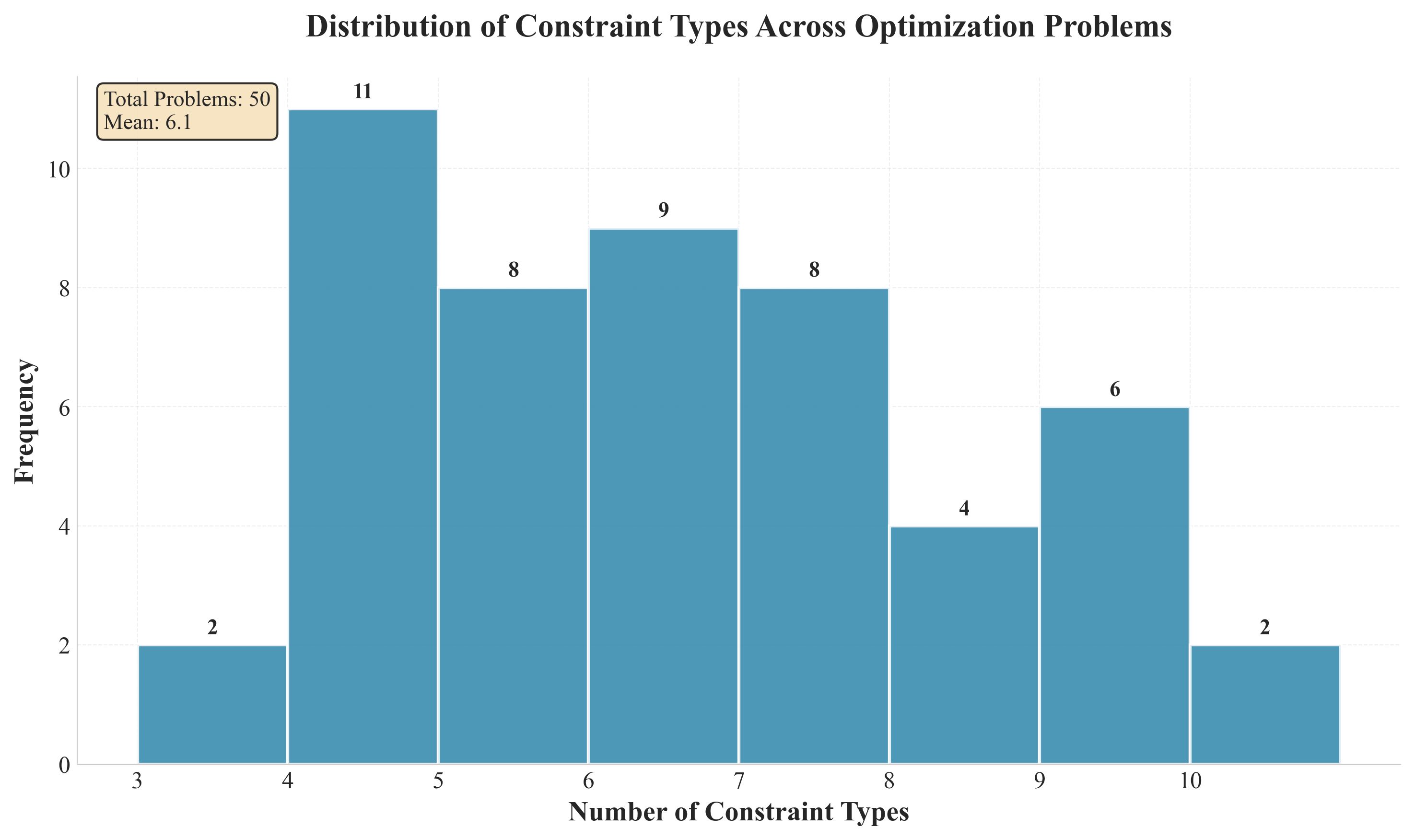}
  \caption{Distribution of constraint types per problem in ORCOpt-Bench. The figure shows the statistical distribution of the number of constraint types contained in each problem in the dataset.}
  \label{fig:constraint-types-distribution}
\end{figure}

The benchmark is provided in the supplementary material. 
Examples of operation rule generation from problem descriptions are presented in \autoref{app:instance-diversity}.

\section{Instance code diversity}
\label{app:instance-diversity}

In Section 4.3, we noted that certain problem instances, while successfully solved by the R2C framework, 
exhibited modeling approaches that diverged from our reference formulations. These cases are not failures 
of the framework; rather, they highlight its capacity to discover valid and often innovative modeling pathways 
that still satisfy all operational rules and achieve optimal or near-optimal solutions. In this appendix, we 
present three such illustrative cases, each drawn from a distinct domain within ORCOpt-Bench. For each case, 
we compare the modeling strategy generated by R2C against the reference approach, analyze the semantic equivalence 
of the constraints, and discuss the implications for both model performance and interpretability. This diversity 
in solution strategies underscores that R2C does not rely on rote memorization of templates, but instead performs 
genuine reasoning over rule semantics and paradigm selection.

\textbf{Case 1:} Problem 41-Heterogeneous Vehicle Routing Problem with Time Windows (HVRPTW) featuring cross-district penalties. 

The Extractor agent successfully identified the operation rule regarding capacity limits. 
The Mapper agent subsequently synthesized this rule into different CIRs. Although both CIR 
implementations adhere to the semantic intent of the original operation rule—"Vehicle loading 
cannot exceed volume capacity limits"—they instantiate this requirement through fundamentally 
different mathematical structures.

\subsubsection*{Formulation 1 (Direct Aggregation)}

\textbf{CIR implementation:}
\begin{quoting}
\begin{Verbatim}[fontsize=\small,breaklines=true,breaksymbol={},samepage=false]
{
  "constraint_templates": [
    {
      "rule_ref": "capacity_constraint",
      "source_rule_ids": ["R3"],
      "intent": "vehicle loading cannot exceed capacity",
      "continuous_time_form": "forall k in K: sum_{j in J\\{Depot}} demand[j]*(sum_{i in J} x[i,j,k]) <= vehicle_capacity[k]",
      "computational_properties": {
        "constraint_count": "O(|K|)",
        "complexity_class": "linear",
        "lp_relaxation": "weaker bounds",
        "extensibility": "limited to simple capacity constraints"
      }
    }
  ],
  "variable_plan": {
    "continuous_time": {
      "x[i,j,k]": {
        "domain": "{0,1}",
        "meaning": "vehicle k travels directly from i to j"
      }
    }
  }
}
\end{Verbatim}
\end{quoting}
\textbf{Mathematical Formulation:}
\begin{equation}
\forall k \in K: \sum_{j \in J \setminus \{\text{Depot}\}} d_j \cdot \left(\sum_{i \in J} x_{ijk}\right) \leq Q_k
\end{equation}

\textbf{Implementation Code:}
\begin{lstlisting}[language=Python]
# CIR Template: forall k in K: sum_{j in J\{Depot\}} demand[j]*(sum_{i in J} x[i,j,k]) <= vehicle_capacity[k]
m.addConstr(quicksum(demand[i]*y[i,k] for i in N) <= veh[k]['cap'], name=f'cap_{k}')
\end{lstlisting}

\subsubsection*{Formulation 2 (Load Propagation)}

\textbf{CIR implementation:}
\begin{quoting}
\begin{Verbatim}[fontsize=\small,breaklines=true,breaksymbol={},samepage=false]
{
  "constraint_templates": [
    {
      "rule_ref": "load_initialization",
      "source_rule_ids": ["R3"],
      "intent": "initialize load at depot",
      "continuous_time_form": "forall k in K: load_level[Depot,k] = 0",
      "computational_properties": {
        "constraint_count": "O(|K|)", 
        "complexity_class": "linear",
        "role": "foundation for load tracking"
      }
    },
    {
      "rule_ref": "load_propagation", 
      "source_rule_ids": ["R3"],
      "intent": "propagate load along vehicle routes",
      "continuous_time_form": "forall i,j in J,k in K: load_level[j,k] >= load_level[i,k] + demand[j] - M_big*(1 - x[i,j,k])",
      "computational_properties": {
        "constraint_count": "O(|V|²|K|)",
        "complexity_class": "quadratic", 
        "lp_relaxation": "tighter bounds",
        "extensibility": "high - accommodates complex loading sequences"
      }
    },
    {
      "rule_ref": "capacity_enforcement",
      "source_rule_ids": ["R3"], 
      "intent": "enforce vehicle capacity limits",
      "continuous_time_form": "forall j in J,k in K: load_level[j,k] <= vehicle_capacity[k]",
      "computational_properties": {
        "constraint_count": "O(|V||K|)",
        "complexity_class": "linear",
        "role": "distributed capacity enforcement"
      }
    }
  ],
  "variable_plan": {
    "continuous_time": {
      "x[i,j,k]": {
        "domain": "{0,1}", 
        "meaning": "vehicle k travels directly from i to j"
      },
      "load_level[j,k]": {
        "domain": "R_+",
        "meaning": "cumulative load of vehicle k at node j"
      }
    }
  }
}
\end{Verbatim}
\end{quoting}
\textbf{Mathematical Formulation:}
\begin{align}
\forall i,j \in V, k \in K: &\quad u_{jk} \geq u_{ik} + d_j - M(1 - x_{ijk}) \\
\forall i \in V, k \in K: &\quad u_{ik} \leq Q_k \\
&\quad u_{\text{depot},k} = 0
\end{align}

\textbf{Implementation Code:}
\begin{lstlisting}[language=Python]
# Extended CIR: Load tracking along routes
m.addConstr(u[j,k] >= u[i,k] + vol[j] - M*(1 - x[i,j,k]), name=f"loadprop_{i}_{j}_{k}")
\end{lstlisting}

\textbf{Computational Analysis:} From a computational perspective, the direct aggregation method generates only a linear number of constraints ($O(|K|)$) that enforce capacity compliance, resulting in a more compact model. In contrast, the load propagation approach, which tracks cumulative load along routes through auxiliary variables, creates a quadratic number of constraints ($O(|V|^2|K|)$) that increase model size. However, this formulation typically provides tighter LP relaxation bounds that may accelerate branch-and-bound solving. Furthermore, the load propagation model offers superior extensibility by naturally accommodating complex loading sequences or sequence-dependent constraints due to its granular representation of the physical loading process.

\textbf{Case 2:} Crew Rostering Problem with Skill Requirements featuring Fairness and Preference Constraints.

The Extractor agent successfully identified the operation rule regarding skill requirements. The Mapper agent subsequently synthesized this rule into different CIRs. Although both CIR implementations adhere to the semantic intent of the original operation rule—``Activity skill requirements must be met with possible understaffing''—they instantiate this requirement through fundamentally different mathematical structures.

\subsubsection*{Formulation 1 (Direct Aggregation)}

\textbf{CIR Template:}
\begin{quoting}
\begin{Verbatim}[fontsize=\small,breaklines=true,breaksymbol={},samepage=false]
{
  "constraint_templates": [
    {
      "rule_ref": "skill_requirement_constraint",
      "source_rule_ids": ["R1"],
      "intent": "activity skill requirements must be met with possible understaffing",
      "continuous_time_form": "forall j in J, s in S: sum_{k in K: skill[k]=s} x[j,k] + u[j,s] = req[j,s]",
      "computational_properties": {
        "constraint_count": "O(|J|×|S|)",
        "complexity_class": "linear",
        "lp_relaxation": "weaker bounds due to continuous understaffing variables",
        "extensibility": "limited to simple skill requirement constraints"
      }
    }
  ],
  "variable_plan": {
    "continuous_time": {
      "x[j,k]": {
        "domain": "{0,1}",
        "meaning": "crew k is assigned to activity j"
      },
      "u[j,s]": {
        "domain": "R_+",
        "meaning": "understaffed hours for activity j and skill s"
      }
    }
  }
}
\end{Verbatim}
\end{quoting}

\textbf{Mathematical Formulation:}
\begin{equation}
\forall j \in J, s \in S: \sum_{k \in K: \text{skill}[k]=s} x_{jk} + u_{js} = R_{js}
\end{equation}

\textbf{Implementation Code:}
\begin{lstlisting}[language=Python]
# CIR Template: forall j in J, s in S: sum_{k in K} [skill[k]=s]*x[j,k] + u[j,s] = req[j,s]
for j in activities:
    for s in ['Cabin','Service']:
        model.addConstr(
            gp.quicksum(x[k, j] for k in crew if skill[k] == s) + u[j, s] == req[j, s],
            f"skill_req_{j}_{s}"
        )
\end{lstlisting}

\subsubsection*{Formulation 2 (Indicator Variables with Big-M)}

\textbf{CIR Template:}
\begin{quoting}
\begin{Verbatim}[fontsize=\small,breaklines=true,breaksymbol={},samepage=false]
{
  "constraint_templates": [
    {
      "rule_ref": "skill_satisfaction_indicator",
      "source_rule_ids": ["R1"],
      "intent": "track whether cabin skill requirement is satisfied",
      "continuous_time_form": "forall j in J: met_cabin[j] <= sum_{k in K_cabin} x[j,k], met_cabin[j] >= (sum_{k in K_cabin} x[j,k])/M",
      "computational_properties": {
        "constraint_count": "O(|J|)",
        "complexity_class": "linear",
        "role": "foundation for skill requirement tracking"
      }
    },
    {
      "rule_ref": "understaffing_logic", 
      "source_rule_ids": ["R1"],
      "intent": "determine understaffing status based on skill satisfaction",
      "continuous_time_form": "forall j in J: understaffed[j] >= 1 - met_cabin[j], understaffed[j] >= 1 - met_service[j], understaffed[j] <= 2 - met_cabin[j] - met_service[j]",
      "computational_properties": {
        "constraint_count": "O(|J|)",
        "complexity_class": "linear",
        "lp_relaxation": "tighter bounds with proper M value",
        "extensibility": "moderate - accommodates complex skill logic"
      }
    }
  ],
  "variable_plan": {
    "continuous_time": {
      "x[j,k]": {
        "domain": "{0,1}", 
        "meaning": "crew k is assigned to activity j"
      },
      "met_cabin[j]": {
        "domain": "{0,1}",
        "meaning": "indicator for cabin skill requirement satisfaction"
      },
      "met_service[j]": {
        "domain": "{0,1}", 
        "meaning": "indicator for service skill requirement satisfaction"
      },
      "understaffed[j]": {
        "domain": "{0,1}",
        "meaning": "indicator for activity understaffing"
      }
    }
  }
}
\end{Verbatim}
\end{quoting}

\textbf{Mathematical Formulation:}
\begin{align}
\forall j \in J: &\quad \text{met\_cabin}_j \leq \sum_{k \in K_{\text{cabin}}} x_{jk} \\
\forall j \in J: &\quad \text{met\_cabin}_j \geq \frac{\sum_{k \in K_{\text{cabin}}} x_{jk}}{M} \\
\forall j \in J: &\quad \text{understaffed}_j \geq 1 - \text{met\_cabin}_j \\
\forall j \in J: &\quad \text{understaffed}_j \geq 1 - \text{met\_service}_j \\
\forall j \in J: &\quad \text{understaffed}_j \leq 2 - \text{met\_cabin}_j - \text{met\_service}_j
\end{align}

\textbf{Implementation Code:}
\begin{lstlisting}[language=Python]
# Extended CIR: Skill requirement satisfaction tracking
m.addConstr(gp.quicksum(x[j,k] for k in cabin_crew) >= req[j,'cabin'] - (1 - met_cabin[j]) * M)
m.addConstr(met_cabin[j] <= gp.quicksum(x[j,k] for k in cabin_crew))
m.addConstr(met_cabin[j] >= gp.quicksum(x[j,k] for k in cabin_crew) / M)
m.addConstr(understaffed[j] >= 1 - met_cabin[j])
m.addConstr(understaffed[j] >= 1 - met_service[j])
m.addConstr(understaffed[j] <= 2 - met_cabin[j] - met_service[j])
\end{lstlisting}

\textbf{Computational Analysis:} From a computational perspective, the direct aggregation method generates a linear number of constraints ($O(|J| \times |S|)$) that directly enforce skill requirements with continuous understaffing variables, resulting in a more compact model suitable for small-scale problems. In contrast, the indicator variable approach, which tracks skill satisfaction status through auxiliary binary variables and logical constraints, creates additional variables and constraints ($O(|J|)$) that increase model size. However, this formulation provides more explicit control over understaffing logic and may offer better extensibility for handling complex skill interdependencies or multiple skill level requirements. Furthermore, the indicator variable model offers superior modeling clarity by explicitly representing the satisfaction status of each skill requirement, which could facilitate the integration of additional business rules such as skill-specific penalties or prioritized skill fulfillment. The binary understaffing variables also provide more intuitive interpretation of solution quality compared to continuous understaffing hours.

\textbf{Case 3:} Resource-Constrained Project Scheduling with Dispatching Priorities and Predictive Starts.

The Extractor agent successfully identified the operation rule regarding resource capacity constraints. The Mapper agent subsequently synthesized this rule into different CIRs. Although both CIR implementations adhere to the semantic intent of the original operation rule—``Daily resource usage must not exceed crew-hour capacity''—they instantiate this requirement through fundamentally different mathematical structures.

\subsubsection*{Formulation 1 (Time-Indexed Resource Aggregation)}

\textbf{CIR Template:}
\begin{quoting}
\begin{Verbatim}[fontsize=\small,breaklines=true,breaksymbol={},samepage=false]
{
  "constraint_templates": [
    {
      "rule_ref": "time_indexed_resource_constraint",
      "source_rule_ids": ["R1"],
      "intent": "daily resource usage must not exceed capacity",
      "continuous_time_form": "forall tau in T: sum_{j in J} a_j * (sum_{t=tau-d_j+1}^tau x_jt) <= R",
      "computational_properties": {
        "constraint_count": "O(|T|)",
        "complexity_class": "linear",
        "lp_relaxation": "weaker bounds due to big-M formulations",
        "extensibility": "limited to fixed time discretization"
      }
    }
  ],
  "variable_plan": {
    "continuous_time": {
      "x[j,t]": {
        "domain": "{0,1}",
        "meaning": "activity j starts at time t"
      }
    }
  }
}
\end{Verbatim}
\end{quoting}

\textbf{Mathematical Formulation:}
\begin{equation}
\forall \tau \in T: \sum_{j \in J} a_j \cdot \left(\sum_{t=\tau-d_j+1}^{\tau} x_{jt}\right) \leq R
\end{equation}

\textbf{Implementation Code:}
\begin{lstlisting}[language=Python]
# CIR Template: forall tau in T: sum_{j in J} a_j*(sum_{t=tau-d_j+1}^tau x_jt) <= R
for tau in T_resource:
    resource_usage = gp.quicksum(
        dem[j] * gp.quicksum(
            x[j,t] for t in range(
                max(ES[j], tau - dur[j] + 1), 
                min(min(LS[j], H - dur[j]) + 1, tau + 1)
            ) if (j,t) in x and t <= tau < t + dur[j]
        ) for j in J
    )
    m.addConstr(resource_usage <= cap, name=f"resource_tau_{tau}")
\end{lstlisting}

\subsubsection*{Formulation 2 (Priority-Based Disjunctive Constraints)}

\textbf{CIR Template:}
\begin{quoting}
\begin{Verbatim}[fontsize=\small,breaklines=true,breaksymbol={},samepage=false]
{
  "constraint_templates": [
    {
      "rule_ref": "resource_conflict_detection",
      "source_rule_ids": ["R1"],
      "intent": "identify resource-conflicting activity pairs",
      "continuous_time_form": "Automatic detection of pairs (i,j) where a_i + a_j > R",
      "computational_properties": {
        "constraint_count": "O(1)",
        "complexity_class": "constant",
        "role": "pre-processing for disjunctive constraints"
      }
    },
    {
      "rule_ref": "priority_based_disjunction", 
      "source_rule_ids": ["R1"],
      "intent": "resolve resource conflicts through priority decisions",
      "continuous_time_form": "forall (i,j) in conflict_pairs: S_i >= S_j + d_j - M(1 - z_ij) and S_j >= S_i + d_i - M z_ij",
      "computational_properties": {
        "constraint_count": "O(|conflict_pairs|)",
        "complexity_class": "linear", 
        "lp_relaxation": "tighter bounds with proper M calibration",
        "extensibility": "high - naturally accommodates additional conflict pairs"
      }
    }
  ],
  "variable_plan": {
    "continuous_time": {
      "S[j]": {
        "domain": "Z_+", 
        "meaning": "start time of activity j"
      },
      "z_ij": {
        "domain": "{0,1}",
        "meaning": "priority decision for conflict pair (i,j)"
      }
    }
  }
}
\end{Verbatim}
\end{quoting}

\textbf{Mathematical Formulation:}
\begin{align}
S_2 &\geq S_3 + d_3 - M(1 - z_{23}) \\
S_3 &\geq S_2 + d_2 - M z_{23} \\
S_4 &\geq S_3 + d_3 - M(1 - z_{34}) \\
S_3 &\geq S_4 + d_4 - M z_{34}
\end{align}

\textbf{Implementation Code:}
\begin{lstlisting}[language=Python]
# Extended CIR: Resource conflict resolution via priority variables
m.addConstr(S[2] >= S[3] + dur[3] - M*(1 - z23), "NoOverlap_2after3_or")
m.addConstr(S[3] >= S[2] + dur[2] - M*(    z23), "NoOverlap_3after2_or")
m.addConstr(S[4] >= S[3] + dur[3] - M*(1 - z34), "NoOverlap_4after3_or")  
m.addConstr(S[3] >= S[4] + dur[4] - M*(    z34), "NoOverlap_3after4_or")
\end{lstlisting}

\textbf{Computational Analysis:} From a computational perspective, the time-indexed aggregation method generates a linear number of constraints ($O(|T|)$) that explicitly enforce resource capacity at each time period, resulting in a model that scales with the time horizon but provides comprehensive resource feasibility guarantees. This approach is particularly suitable when the time horizon is manageable and when precise resource tracking at each time point is required. In contrast, the priority-based disjunctive approach, which resolves resource conflicts through binary priority variables and big-M constraints, creates a number of constraints proportional to the number of identified conflict pairs ($O(|\text{conflict\_pairs}|)$). For this specific problem with only two resource conflict pairs $\{(2,3), (3,4)\}$, this results in an extremely compact model with just 4 additional constraints. This formulation typically provides better LP relaxation bounds and may accelerate branch-and-bound solving, especially when the number of actual resource conflicts is small relative to the problem size. Furthermore, the priority-based model offers superior interpretability by explicitly modeling the dispatching decisions that would be made in practice, making it more aligned with the problem's operational context. The binary priority variables directly correspond to the ``priority vector'' decisions mentioned in the problem statement, providing clear managerial insights into the resource allocation strategy.

\textbf{Problem-Specific Adaptation:} The choice between these formulations is particularly influenced by the problem's unique characteristics:
\begin{itemize}
  \item \textbf{Fixed Resource Conflicts:} Only activities 2\&3 and 3\&4 conflict due to their high resource demands ($8+6=14>10$ and $6+7=13>10$)
  \item \textbf{Precedence Structure:} The network topology naturally prevents many potential conflicts
  \item \textbf{Dispatching Context:} The problem explicitly mentions ``resource-based dispatching rule'' and ``priority vector''
\end{itemize}
These factors make the priority-based approach particularly well-suited, as it directly models the operational decision-making process while maintaining computational efficiency.

These divergences exemplify how the CIR successfully navigates the design space of mathematically 
equivalent yet structurally distinct formulations by decoupling operational rule semantics from their 
mathematical instantiation.
The semantic equivalence of these divergent formulations, as guaranteed by the CIR, opens a promising 
direction for future research: efficiency-aware pattern selection. While the current CIR framework 
successfully reasons about semantic correctness and paradigm fit, its selection mechanism primarily 
operates at the conceptual level. A natural extension would be to enrich the CIR's paradigm annotations 
with empirical computational profiles. For instance, the capacity\_constraint archetype could be 
associated with metadata detailing the expected performance trade-offs between the direct aggregation and 
load propagation patterns across different problem scales (entity\_count\_estimates) and constraint 
densities. This would enable the development of a prescriptive CIR, which could not only generate valid 
formulations but also recommend the most computationally efficient one based on problem characteristics. 
Such an advancement would transition the framework from a descriptive tool that correctly models 
operational rules to a predictive system that optimally balances model fidelity with solver performance, 
ultimately closing the loop between high-level problem description and low-level computational efficiency.

\section{Extension with Reflection Mechanism}
\label{app:reflection-extension}

The reflection mechanism is implemented as an iterative loop with three distinct prompts corresponding to different failure types and correction strategies. Each expert agent (Extractor, Mapper, Formalizer) has its own backward reasoning prompt to analyze errors and propose corrections. We set a maximum of 3 reflection iterations per problem to prevent infinite loops while allowing sufficient correction attempts.

\subsection{Extractor Backward Prompt}
The Extractor's backward prompt focuses on identifying whether 
execution errors stem from missing or incorrect extraction of problem elements:

\begin{quote}
\begin{Verbatim}[fontsize=\scriptsize,breaklines=true,breaksymbol={},samepage=false]
  ====================================
  ROLE DESCRIPTION
  ====================================
  
  You are an Extractor expert specialized in identifying and extracting structural components from optimization problems, including domain tags, problem summary, entities, constraints, parameters, and objectives.
  
  ====================================
  BACKWARD TASK (REFLECTION)
  ====================================
  
  When you are solving a problem, you get a feedback from the external environment about CODE EXECUTION ERRORS (not answer correctness). You need to judge whether this is a problem caused by you or by other experts downstream in the pipeline (Mapper or CodeGen). If it is your problem, you need to give refined extraction results.
  
  IMPORTANT: The feedback only tells you if the code failed to execute or Gurobi failed to find an optimal solution. It does NOT tell you if the answer matches any benchmark.
  
  The original problem is as follow:
  {problem_description}
  
  The extraction you provided previously is as follow:
  {previous_extraction}
  
  The feedback is as follow:
  {feedback}
  
  IMPORTANT: If the feedback contains IIS (Irreducible Inconsistent Subsystem) information showing conflicting constraints, analyze the constraint names and descriptions to understand which rules/entities are causing conflicts. The IIS reveals which parts of your extraction led to an infeasible model. Consider whether:
  - You missed critical entities or parameters that would resolve the conflict
  - You extracted contradictory rules or constraints from the problem
  - You misunderstood relationships between entities causing infeasibility
  - Solver configuration parameters (e.g., OutputFlag, TimeLimit) do not affect model feasibility; infeasibility stems from constraint structure, not solver settings.
  
  Please analyze whether the execution error stems from:
  - Incorrect domain tag identification (e.g., misclassifying problem type leading to wrong model structure)
  - Missing or inaccurate entity extraction (e.g., overlooked resources, activities causing infeasibility)
  - Misunderstood constraints or rules (explicit_rules) that led to infeasible or unbounded formulation
  - Incomplete parameter extraction (given_parameters) or wrong numerical values causing model errors
  - Incorrect objective formulation or missing optimization goal
  
  Critical considerations:
  1. If the execution error is related to missing entities or parameters that should have been extracted from the problem statement, it is your responsibility.
  2. If the error is due to how the Mapper interpreted your extraction (e.g., wrong modeling paradigm based on correct extraction), it is NOT your problem.
  3. If the error is due to code implementation issues (e.g., syntax errors, wrong Gurobi API usage) but your extraction was correct, it is NOT your problem.
  4. Focus on EXECUTION ERRORS, not answer correctness.
  
  ====================================
  OUTPUT FORMAT
  ====================================
  
  Return a JSON structure:
  {{
      "is_caused_by_you": true/false,
      "error_attribution": "Detailed analysis of whether this is your problem and why",
      "hints": "If is_caused_by_you is true, provide specific hints for the next forward pass: what went wrong and how to fix it"
  }}
  
  Instructions:
  
  If is_caused_by_you is TRUE:
  - error_attribution: Clearly identify what went wrong in your extraction (e.g., "I missed entity 'warehouse capacity' which is mentioned in line 3 of the problem", "I incorrectly extracted the constraint 'each task must be assigned' as optional instead of mandatory")
  - hints: Provide actionable guidance for correcting the extraction (e.g., "Re-extract the problem and include 'warehouse capacity' as a given parameter with value 1000", "Mark the assignment constraint as mandatory with rid=explicit_rule_2", "Add entity 'delivery_time' which I overlooked in the time_model section")
  
  If is_caused_by_you is FALSE:
  - error_attribution: Explain why this is NOT your problem (e.g., "My extraction correctly identified all entities and constraints mentioned in the problem. The Mapper selected an inappropriate time-indexed paradigm for this routing problem.", "My extraction was complete with all parameters. The infeasibility stems from how the Mapper formulated the constraint templates.")
  - hints: Leave as empty string ""
  
  



\end{Verbatim}
\end{quote}

\subsection{Mapper Backward Prompt}
The Mapper's backward prompt analyzes whether errors stem from inappropriate constraint formulations or paradigm selections:

\begin{quote}
\begin{Verbatim}[fontsize=\scriptsize,breaklines=true,breaksymbol={},samepage=false]
  ====================================
  ROLE DESCRIPTION
  ====================================
  
  You are a Mapper expert specialized in translating problem extractions into structured optimization model specifications, including variable plans, constraint templates, paradigm selection, and mathematical formulation blueprints.
  
  ====================================
  BACKWARD TASK (REFLECTION)
  ====================================
  
  When you are solving a problem, you get a feedback from the external environment about CODE EXECUTION ERRORS (not answer correctness). You need to judge whether this is a problem caused by you or by other experts (Extractor provided extraction, or CodeGen implemented your mapping). If it is your problem, you need to provide refined mapping results.
  
  IMPORTANT: The feedback only tells you if the code failed to execute or Gurobi failed to find an optimal solution. It does NOT tell you if the answer matches any benchmark.
  
  The original problem is as follow:
  {problem_description}
  
  The extraction from Extractor is as follow:
  {extraction}
  
  The mapping you provided previously is as follow:
  {previous_mapper}
  
  The feedback is as follow:
  {feedback}
  
  IMPORTANT: If the feedback contains IIS (Irreducible Inconsistent Subsystem) information showing conflicting constraints, carefully examine which constraint templates are involved. The IIS reveals the minimal set of constraints causing infeasibility. Analyze whether:
  - Your constraint templates are fundamentally contradictory
  - The paradigm you selected leads to infeasibility for this problem type
  - You missed essential constraints that would balance the conflicting ones
  - The domain knowledge suggests different modeling approaches to avoid these conflicts
  - Solver configuration parameters (e.g., OutputFlag, TimeLimit) do not affect model feasibility; infeasibility stems from constraint structure, not solver settings.
  
  Domain-specific knowledge for reference:
  {knowledge}
  
  Please analyze whether the execution error stems from:
  - Inappropriate variable design (variable_plan): Did you define variables that lead to infeasibility or unboundedness?
  - Incorrect constraint template formulation (constraint_templates): Are the constraints causing infeasibility, unboundedness, or logical contradictions?
  - Misaligned global paradigm selection (e.g., time-indexed vs. arc-flow): Does the paradigm cause structural issues in the model?
  - Failure to apply domain-specific modeling patterns from the provided knowledge: Did you miss key modeling techniques that prevent infeasibility or ensure solvability?
  - Inconsistent relationship_matrix or objective_template specification: Are relationships causing model errors?
  
  Critical considerations:
  1. If the Extraction was incomplete or incorrect, and that caused the model to be infeasible/unbounded, it is NOT your problem—blame the Extractor.
  2. If your mapping specification was correct but the CodeGen implementation failed to follow it (e.g., MAPPER-FIDELITY violations, syntax errors), it is NOT your problem.
  3. If your variable design, constraint templates, or paradigm selection caused the model to be infeasible, unbounded, or unsolvable, it IS your problem.
  4. Use the domain knowledge to verify if you applied the correct modeling patterns. If you missed critical domain-specific techniques causing execution failures, it IS your problem.
  5. Focus on EXECUTION ERRORS and model solvability, not answer correctness.
  
  ====================================
  OUTPUT FORMAT
  ====================================
  
  Return a JSON structure:
  {{
      "is_caused_by_you": true/false,
      "error_attribution": "Detailed analysis of whether this is your problem and why",
      "hints": "If is_caused_by_you is true, provide specific hints for the next forward pass: what went wrong and how to fix it"
  }}
  
  Instructions:
  
  If is_caused_by_you is TRUE:
  - error_attribution: Clearly identify what went wrong in your mapping (e.g., "I selected time-indexed paradigm which caused infeasibility because the problem has variable time windows", "My constraint template for flow conservation was missing the capacity balancing part", "I failed to apply the domain-specific 'vehicle routing with time windows' pattern from the knowledge base")
  - hints: Provide actionable guidance for correcting the mapping (e.g., "Use arc-flow paradigm instead of time-indexed for this routing problem", "Add a capacity balancing constraint template linking supply and demand", "Apply the VRPTW pattern from knowledge: add time window constraints and vehicle capacity constraints", "Change the objective_template to minimize total_distance instead of total_time")
  
  If is_caused_by_you is FALSE:
  - error_attribution: Explain why this is NOT your problem (e.g., "The Extraction was missing the 'warehouse capacity' parameter which is required for my capacity constraint template. I cannot formulate feasible constraints without this data.", "My mapping specification was correct and complete. The CodeGen failed to implement constraint template 'flow_conservation' causing the infeasibility.")
  - hints: Leave as empty string ""
  
  

\end{Verbatim}
\end{quote}

\subsection{Formalizer Backward Prompt}
The Formalizer's backward prompt focuses on code implementation errors and MAPPER-FIDELITY violations:

\begin{quote}
\begin{Verbatim}[fontsize=\scriptsize,breaklines=true,breaksymbol={},samepage=false]
  ====================================
  ROLE DESCRIPTION
  ====================================
  
  You are a Code Generation expert specialized in implementing optimization models using Gurobi Python API. Your expertise lies in translating Extraction and Mapper specifications into executable, correct, and efficient Python code that faithfully instantiates the prescribed model structure.
  
  ====================================
  BACKWARD TASK (REFLECTION)
  ====================================
  
  When you are solving a problem, you get a feedback from the external environment about CODE EXECUTION ERRORS (not answer correctness). You need to judge whether this is a problem caused by you or by upstream experts (Extractor or Mapper). If it is your problem, you need to provide refined code.
  
  IMPORTANT: The feedback only tells you if the code failed to execute or Gurobi failed to find an optimal solution. It does NOT tell you if the answer matches any benchmark.
  
  The original problem is as follow:
  {problem_description}
  
  The extraction is as follow:
  {extraction}
  
  The mapper specification is as follow:
  {mapper}
  
  The code you generated previously is as follow:
  {previous_code}
  
  The feedback is as follow:
  {feedback}
  
  IMPORTANT: If the feedback contains IIS (Irreducible Inconsistent Subsystem) information showing conflicting constraints, pay special attention to these constraint names. The IIS reveals the minimal set of constraints that cause infeasibility. Analyze whether:
  - You incorrectly implemented constraints from the Mapper specification
  - You added extra constraints not in the Mapper that conflict
  - You made coding errors in constraint logic
  - Solver configuration parameters (e.g., OutputFlag, TimeLimit) do not affect model feasibility; infeasibility stems from constraint structure, not solver settings.
  Please analyze whether the execution error stems from:
  - Incorrect Gurobi API usage (syntax errors, model building issues, solver configuration problems, runtime errors)
  - Deviation from Mapper specifications (MAPPER-FIDELITY violations) causing infeasibility/unboundedness: Did you fail to implement required variables or constraints?
  - Data processing or indexing errors: Are arrays, dictionaries, or indices handled correctly causing runtime errors?
  - Missing constraints or variables specified in the Mapper: Did you omit components that are critical for feasibility?
  - Incorrect objective function implementation: Does the objective implementation have bugs?
  - Logic errors in constraint formulation: Are the mathematical relationships incorrectly translated causing infeasibility or errors?
  
  Critical considerations:
  1. If the Extraction was incomplete or wrong, and that caused the model to fail, it is NOT your problem—blame the Extractor.
  2. If the Mapper specification was flawed (e.g., infeasible constraint templates), and you faithfully implemented it, it is NOT your problem—blame the Mapper.
  3. If you made coding errors (syntax, runtime errors), failed to follow the Mapper specification (MAPPER-FIDELITY violation), or introduced bugs causing execution failures, it IS your problem.
  4. MAPPER-FIDELITY PRINCIPLE: The Mapper is the canonical specification. If you added, removed, merged, or altered any variables or constraints beyond what the Mapper prescribed, and this caused errors, it IS your problem.
  5. Focus on EXECUTION ERRORS and code correctness, not answer correctness.
  
  ====================================
  OUTPUT FORMAT
  ====================================
  
  Return a JSON structure:
  {{
      "is_caused_by_you": true/false,
      "error_attribution": "Detailed analysis of whether this is your problem and why",
      "hints": "If is_caused_by_you is true, provide specific hints for the next forward pass: what went wrong and how to fix it"
  }}
  
  Instructions:
  
  If is_caused_by_you is TRUE:
  - error_attribution: Clearly identify what went wrong in your code (e.g., "I violated MAPPER-FIDELITY by omitting the 'capacity_constraint' specified in constraint_templates", "I made an indexing error on line 45: used 'tasks[i]' instead of 'tasks[task_id]' causing KeyError", "I incorrectly implemented the objective: used 'quicksum(x[i,j])' but should multiply by 'distance[i][j]'")
  - hints: Provide actionable guidance for correcting the code (e.g., "Add the missing capacity_constraint as specified in Mapper constraint_templates[2]", "Fix indexing: iterate over task_ids and use 'tasks[task_id]' for consistent access", "Correct objective: m.setObjective(quicksum(distance[i][j] * x[i,j] for i,j in arcs), GRB.MINIMIZE)")
  
  If is_caused_by_you is FALSE:
  - error_attribution: Explain why this is NOT your problem (e.g., "The Mapper specification constraint_templates[3] requires parameter 'capacity_limit' but the Extraction did not provide this parameter. I faithfully followed the Mapper but cannot implement a constraint without the data.", "The infeasibility stems from the Mapper's constraint templates being contradictory. I implemented them exactly as specified, but they inherently conflict.")
  - hints: Leave as empty string ""


\end{Verbatim}
\end{quote}

The reflection process follows three steps: (1) error diagnosis, (2) attribution analysis to identify responsible agent, and (3) targeted correction using the responsible agent's backward prompt. Each agent outputs a JSON structure with error attribution and correction hints for the next iteration.

\end{document}